\definecolor{darkgreen}{rgb}{0,0.5,0}
\definecolor{darkred}{rgb}{0.7,0,0}
\definecolor{teal}{rgb}{0.3,0.8,0.8}
\definecolor{orange}{rgb}{1.0,0.5,0.0}
\definecolor{purple}{rgb}{0.8,0.0,0.8}
\newcommand{\kibitz}[2]{\ifnum\Comments=1{\textcolor{#1}{\textsf{\footnotesize #2}}}\fi}
\definecolor{Gray}{gray}{0.9}
\newcommand{\KL}{\mathrm{KL}}
\newcommand{\pre}{\mathrm{pre}}
\definecolor{orange2}{rgb}{1.0, 0.27, 0.0}
\title{Understanding Reinforcement Learning-Based Fine-Tuning of Diffusion Models: A Tutorial and Review}
\author[1]{Masatoshi Uehara\thanks{\texttt{uehara.masatoshi@gene.com}}} 
\author[2]{Yulai Zhao\thanks{\texttt{yulaiz@princeton.edu}. Equal contribution.}}
\author[1]{Tommaso Biancalani}
\author[3]{Sergey Levine}
\affil[1]{Genentech}
\affil[2]{Princeton University} 
\affil[3]{University of California, Berkeley}
\begin{document}

\maketitle

\vspace{-5mm}
\begin{abstract}
{

This tutorial provides a comprehensive survey of methods for fine-tuning diffusion models to optimize downstream reward functions. While diffusion models are widely known to provide excellent generative modeling capability, practical applications in domains such as biology require generating samples that maximize some desired metric (e.g., translation efficiency in RNA, docking score in molecules, stability in protein). In these cases, the diffusion model can be optimized not only to generate realistic samples but also to maximize the measure of interest explicitly. Such methods are based on concepts from reinforcement learning (RL). We explain the application of various RL algorithms, including PPO, differentiable optimization, reward-weighted MLE, value-weighted sampling, and path consistency learning, tailored specifically for fine-tuning diffusion models. We aim to explore fundamental aspects such as the strengths and limitations of different RL-based fine-tuning algorithms across various scenarios, the benefits of RL-based fine-tuning compared to non-RL-based approaches, and the formal objectives of RL-based fine-tuning (target distributions). Additionally, we aim to examine their connections with related topics such as classifier guidance, Gflownets, flow-based diffusion models, path integral control theory, and sampling from unnormalized distributions such as MCMC. The code of this tutorial is available at \href{https://github.com/masa-ue/RLfinetuning_Diffusion_Bioseq}{https://github.com/masa-ue/RLfinetuning\_Diffusion\_Bioseq.} } 
\end{abstract}

\vspace{-5mm}
\section*{Introduction}

Diffusion models \citep{sohl2015deep,ho2020denoising,song2020denoising} are widely recognized as powerful tools for generative modeling. They are able to accurately model complex distributions by closely emulating the characteristics of the training data. There are many applications of diffusion models in various fields, including computer vision \citep{podell2023sdxl}, natural language processing \citep{austin2021structured}, biology \citep{avdeyev2023dirichlet,stark2024dirichlet,li2023latent}, chemistry \citep{jo2022score,xu2022geodiff,hoogeboom2022equivariant}, and biology \citep{avdeyev2023dirichlet,stark2024dirichlet,campbell2024generative}.  

While diffusion models exhibit significant power in capturing the training data distribution, there's often a need to customize these models for particular downstream reward functions. For instance, in computer vision, Stable Diffusion~\citep{Rombach_2022_CVPR} serves as a strong backbone pre-trained model. However, we may want to fine-tune it further by optimizing downstream reward functions such as aesthetic scores or human-alignment scores \citep{black2023training,fan2023dpok}. Similarly, in fields such as biology and chemistry, various sophisticated diffusion models have been developed for DNA, RNA, protein sequences, and molecules, effectively modeling biological and chemical spaces. Nonetheless, biologists and chemists typically aim to optimize specific downstream objectives such as cell-specific expression in DNA sequences \citep{gosai2023machine,lal2024reglm,sarkar2024designing}, translational efficiency/stability of RNA sequences \citep{castillo2021machine,agarwal2022genetic}, stability/bioactivity of protein sequence \citep{frey2023protein,widatalla2024aligning} or QED/SA scores of molecules \citep{zhou2019optimization}. 

\begin{figure}[!t]
    \centering
    \includegraphics[width=0.9\linewidth]{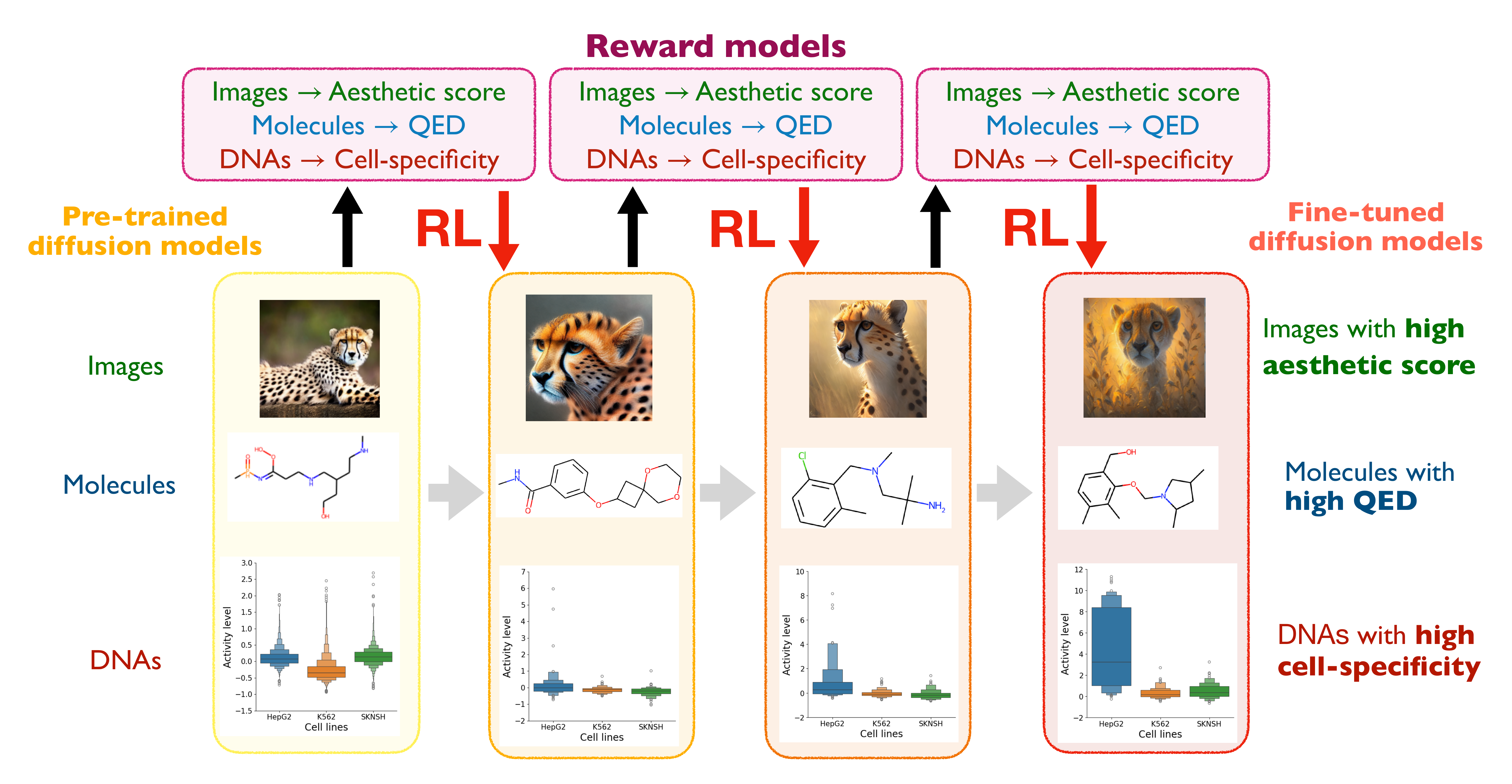}
    \caption{{ Illustrative examples of RL-based fine-tuning, aimed at optimizing pre-trained diffusion models to maximize downstream reward functions.}  }
    \label{fig:enter-label}
\end{figure}

To achieve this goal, numerous algorithms have been proposed for fine-tuning diffusion models via reinforcement learning (RL) (e.g., \citet{black2023training,fan2023dpok,clark2023directly,prabhudesai2023aligning,uehara2024finetuning}), aiming to optimize downstream reward functions. { RL is a machine learning paradigm where agents learn to make sequential decisions to maximize reward signals \citep{sutton2018reinforcement,agarwal2019reinforcement}.
 In our context, RL naturally emerges as a suitable approach due to the sequential structure inherent in diffusion models, where each time step involves a ``decision'' corresponding to how the sample is denoised at that step. This tutorial aims to review recent works for readers interested in understanding the fundamentals of RL-based fine-tuning from a holistic perspective, including the advantages of RL-based fine-tuning over non-RL approaches, the pros and cons of different RL-based fine-tuning algorithms, the formalized goal of RL-based fine-tuning, and its connections with related topics such as classifier guidance. }

The content of this tutorial is primarily divided into three parts. In addition, as an implementation example, we also release the code that employs RL-based fine-tuning for guided biological sequences (DNA/RNA) generation at \href{https://github.com/masa-ue/RLfinetuning_Diffusion_Bioseq}{https://github.com/masa-ue/RLfinetuning\_Diffusion\_Bioseq}.  
\begin{enumerate}
    \item We aim to provide a comprehensive overview of current algorithms. Notably, given the sequential nature of diffusion models, we can naturally frame fine-tuning as a reinforcement learning (RL) problem within Markov Decision Processes (MDPs), as detailed in \pref{sec:entropy} and \ref{sec:theory_RL}. Therefore, we can employ any off-the-shelf RL algorithms such as PPO \citep{schulman2017proximal}, differentiable optimization (direct reward backpropagation), weighted MLE \citep{peters2010relative,peng2019advantage}, value-weighted sampling (close to classifier guidance in \citet{dhariwal2021diffusion}), and path consistency learning \citep{nachum2017bridging}. We discuss these algorithms in detail in \pref{sec:sum_planning} and  \ref{sec:sum_planning_conservative}. {Instead of merely outlining each algorithm, we aim to present both their advantages and disadvantages so readers can select the most suitable algorithms for their specific purposes.} 
\vspace{-2mm}
\item  We categorize various fine-tuning scenarios based on how reward feedback is acquired in \pref{sec:settings}. This distinction is pivotal for practical algorithm design. 
For example, if we can access accurate reward functions,  computational efficiency would become our primary focus. However, in cases where reward functions are unknown, it is essential to learn them from data with reward feedback, 
leading us to take feedback efficiency and distributional shift into consideration as well. Specifically, when reward functions need to be learned from static offline data without any online interactions, we must address the issue of overoptimization, where fine-tuned models are misled by out-of-distribution samples, and generate samples with low genuine rewards. 
This is crucial because, in an offline scenario, the coverage of offline data distribution with feedback is limited; hence, the out-of-distribution region could be extensive \citep{uehara2024bridging}. 
\vspace{-2mm}
\item We provide a detailed discussion on the relationship between RL-based fine-tuning methods and closely related methods in the literature, such as classifier guidance \citep{dhariwal2021diffusion} in \pref{sec:connection}, flow-based diffusion models \citep{liu2022flow,lipman2022flow,tong2023conditional} in \pref{sec:flow_matching}, sampling from unnormalized distributions \citep{zhang2021path} in \pref{sec:sampling}, Gflownets \citep{bengio2023gflownet} in \pref{sec:gflownet}, and path integral control theory \citep{theodorou2010generalized,williams2017model,kazim2024recent} in \pref{sec:path_integral}. We summarize the key messages as follows. 
{ 
\begin{itemize}
\item \pref{sec:gflownet}: The losses used in Gflownets are fundamentally equivalent to those derived from a specific RL algorithm called path consistency learning.
    \item \pref{sec:connection}: Classifier guidance employed in conditional generation is regarded as a specific RL-based fine-tuning method, which we call value-weighted sampling. As formalized in \citet{zhao2024adding}, this observation indicates that any off-the-shelf RL-based fine-tuning algorithms (e.g., PPO and differentiable optimization) can be applied to conditional generation.
    \item \pref{sec:sampling}: Sampling from unnormalized distributions, often referred to as Gibbs distributions, is an important and challenging problem in diverse domains. While MCMC methods are traditionally used for this task, recognizing its similarity to the objectives of RL-based fine-tuning suggests that off-the-shelf RL algorithms can also effectively address the challenge of sampling from unnormalized distributions.
\end{itemize}
} 
\end{enumerate}

\tableofcontents

\section{Preliminaries}

In this section, we outline the fundamentals of diffusion models and elucidate the objective of fine-tuning them.

\subsection{Diffusion Models}

We present an overview of denoising diffusion probabilistic models (DDPM) \citep{ho2020denoising}. For more details, refer to \citet{yang2023diffusion,cao2024survey,chen2024overview,tang2024score}. 

In diffusion models, the objective is to develop a deep generative model that accurately captures the true data distribution. Specifically, denoting the data distribution by $p_{\pre} \in \Delta(\Xcal)$ where $\Xcal$ is an input space, a DDPM aims to approximate $p_{\pre}$ using a parametric model structured as  $$p(x_0;\theta)=\int p(x_{0:T};\theta)d x_{1:T}, \mathrm{where}\, p(x_{0:T};\theta) =p_{T+1}(x_T;\theta)\prod_{t=T}^1 p_{t}(x_{t-1}|x_t;\theta).$$ When $\Xcal$ is an Euclidean space (in $\RR^d$), the forward process is modeled as the following dynamics: 
\begin{align*}
   p_{T+1}(x_T) = \Ncal(0, I),\, p_{t}(x_{t-1}|x_t;\theta) = \Ncal(\rho(x_t,t;\theta), \sigma^2(t)\times I), 
\end{align*} 
where $\Ncal(\cdot,\cdot)$ denotes a normal distribution, $I$ is an identity matrix and $\rho:\RR^d \times [0,T]\to \RR^d$.  In DDPMs, we aim to obtain a set of policies (i.e., denoising process) $\{p_t\}_{t=T+1}^1$, $p_t: \Xcal \to \Delta(\Xcal)$ such that $p(x_0;\theta)\approx p_{\pre}(x_0)$. Indeed, by optimizing the variational bound on the negative log-likelihood, we can derive such a set of policies. For more details, refer to \pref{sec:training}. 

Hereafter, we consider a situation where we have a pre-trained diffusion model that is already trained on a large dataset, such that the model can accurately capture the underlying data distribution. We refer to the pre-trained policies as $\{p^{\mathrm{pre}}_t(\cdot|\cdot)\}_{t=T+1}^1$, and to the marginal distribution at $t=0$ induced by the pre-trained diffusion model as $p_{\pre}$. In other words, 
\begin{align*}
    p^{\pre} (x_{0}) = \int p^{\pre}_{T+1}(x_T)\prod_{t=T}^1 p^{\pre}_{t}(x_{t-1}|x_t) d x_{1:T}. 
\end{align*}

\begin{remark}[Non-Euclidean space]
For simplicity, we typically assume that the domain space is Euclidean. However, we can easily extend most of the discussion to a more general space, such as a Riemannian manifold \citep{de2022riemannian} or discrete space \citep{austin2021structured,campbell2022continuous,benton2024denoising,lou2023discrete}.
\end{remark}

\begin{remark}[Conditional generative models]
Pre-trained models can be conditional diffusion models, such as text-to-image diffusion models \citep{ramesh2022hierarchical}. The extension is straightforward: augmenting the input spaces of policies with an additional space on which we want to condition. More specifically, by denoting that space by $c \in \Ccal$, each policy becomes $p_t(x_t|x_t,c;\theta):\Xcal \times \Ccal \to \Delta(\Xcal)$. 
\end{remark}

\begin{remark}[Extension to Continuous-time diffusion models]
In this tutorial, our discussion on fine-tuning diffusion models will be primarily  formulated on the discrete-time formulation, as we did above Nonetheless, much of our discussion is also applicable to continuous-time diffusion models, as formalized in \citet{uehara2024finetuning}
\end{remark}

\subsubsection{Score-Based Diffusion Models (Optional) } \label{sec:training}

We briefly discuss how to train diffusion models in a continuous-time framework \citep{song2021maximum}. In our tutorial, this section is mainly used to discuss several algorithms (e.g., value-weighted sampling in \pref{sec:value}) and their relationship with flow-based diffusion models  (\pref{sec:flow_matching}) later. Therefore, readers may skip it based on their individual needs.

{ The training process of diffusion models can be summarized as follows. Our objective is to train a sequential mapping from a known noise distribution to a data distribution, formalized through a stochastic differential equation (SDE). Firstly, we define a forward (fixed) SDE that maps from the data distribution to the noise distribution. Then, a time-reversal SDE is expressed as an SDE, which includes the (unknown) score function. Now, by learning this unknown score function from the training data, the time-reversal SDE can be utilized as a generative model. Here are the details.} 

\paragraph{Forward and time-reversal SDE.}
Firstly, we introduce a forward (a.k.a., reference) Stochastic differential equations (SDE) from $0$ to $T$. A common choice is a variance-preserving (VP) process:
\begin{align}\label{eq:reference}
  t \in [0,T];  dx_t = -0.5 x_t dt +  dw_t, x_0 \sim p^{\pre}(x), 
\end{align} 
where $dw_t$ represents standard Brownian motion. Here are two crucial observations:
\begin{itemize}
    \item As $T$ approaches $\infty$, the limiting distribution is $\Ncal(0,\mathbb{I})$. 
    \item The time-reversal SDE \citep{anderson1982reverse}, which preserves the marginal distribution, is expressed as follows: 
    \begin{align}\label{eq:time_reversal}
    t \in [0,T];    dz_t = [0.5z_t + \nabla \log q_{T-t}(z_t)]dt + dw_t. 
    \end{align}
    Here, $q_t\in \Delta(\RR^d)$ denotes the marginal distribution at time $t$ induced by the reference SDE \eqref{eq:reference}. Notably, the marginal distribution of $z_{T-t}$ is the same as the one of $x_t$ induced by the reference SDE. 
\end{itemize}
These observations suggest that with sufficiently large $T$, starting from $\Ncal(0,\mathbb{I})$ and following the time-reversal SDE \eqref{eq:time_reversal}
, we are able to sample from the data distribution (i.e., $p^\pre$) at the terminal time point $T$.

\paragraph{Training score functions from the training data.}
Now, the remaining question is how to estimate the marginal score function $\nabla \log q_t(\cdot)$ in \eqref{eq:time_reversal}. This can be computed using the following principles:
\begin{itemize}
    \item We have  $
     \nabla \log q_t(x_t)= \EE_{x_0 \sim p^{\pre} }[\nabla_{x_t} \log q_{t|0}(x_t\mid x_0)],$  where $q_{t|0}$ represents the conditional distribution of $x_t$ given $x_0$ induced by the reference SDE. 
\item  The conditional score function $\nabla_{x_t} \log q_{t|0}(x_t \mid x_0)$ can be analytically derived as
\begin{align}\label{eq:conditional}
    \frac{\mu^{\diamond}_t x_0 - x_t }{ \{ \sigma^{\diamond}_t \}^2},\,\mu^{\diamond}_t = \exp(-0.5t)x_0,\, \{\sigma^{\diamond}_t \}^2 =1- \exp(-0.5 t). 
\end{align}
\end{itemize} 
Subsequently, by defining a parameterized model $s_{\theta}:\RR^d \times [0, T]\to \RR^d$ and utilizing the following weighted regression, the marginal score is estimated:
\begin{align}\label{eq:loss_diffusion}
  \hat \theta_{\pre} =  \argmin_{\theta} \EE_{t\sim \mathrm{Uni}([0,T]),x_t \sim q_{t\mid 0}(x_t|x_0), x_0 \sim p^{\pre} }[ \lambda(t)  \| \nabla_{x_t} \log q_{t\mid 0}(x_t \mid x_0) - s_{\theta}(x_t,t) \| ^2     ]
\end{align}
where $\lambda:[0,T]\to \RR$ is a weighted function. 

Note we typically use another parametrization well. From \eqref{eq:conditional}, we can easily see that this score is estimated by introducing networks $ \epsilon_{\theta}:\RR^d \times [0,T]\to \RR^d$, which aims to estimate the noise $\frac{x_t - \mu^{\diamond}_t x_0  }{ \sigma^{\diamond}_t}$: 
\begin{align}\label{eq:loss_score}
 \bar \theta_{\pre} =  \argmin_{\theta} \EE_{t\sim \mathrm{Uni}([0,T]),x_t \sim \mu^{\diamond}_t x_0 + \epsilon \sigma^{\diamond}_t,\epsilon \sim \Ncal(0,\mathbb{I}), x_0 \sim p^{\pre} }[ \lambda(t)/\{ \sigma^{\diamond}_t\}^2 \|\epsilon - \epsilon_{\theta}(x_t,t) \| ^2      ]. 
\end{align}
 Then, by denoting the training data as $\{x^{(i)}_0\}$ and setting $\lambda(t) =\{\sigma^{\diamond}_t\}^2 $, the actual loss function from the training data is 
\begin{align}\label{eq:epsilon}
   \argmin_{\theta} \sum_{i=1} \EE_{t\sim \mathrm{Uni}([0,T]),x^{(i)}_t \sim \mu^{\diamond}_t x^{(i)}_0 + \epsilon \sigma^{\diamond}_t, \epsilon \sim \Ncal(0,\mathbb{I})}[  \|\epsilon - \epsilon_{\theta}(x^{(i)}_t,t) \| ^2      ]. 
\end{align}

\paragraph{Inference time.} Once this $\hat \theta_{\pre}$ (or $ \bar \theta_{\pre}$) is learned, we insert it into the time-reversal SDE \eqref{eq:time_reversal} and use it as a generative model. Ultimately, with standard discretization, we obtain:
\begin{align*}
    \rho(x_t,t;\hat \theta_{\pre}) &= x_t + [ 0.5 x_t + s_{\hat \theta_{\pre}}(x_{t},T-t) ] (\delta t)  ,\quad  \{\sigma^{\diamond}_t(t)\}^2 = (\delta t). 
\end{align*} 
Equivalently, this is
\begin{align}
   \rho(x_t,t;\hat \theta_{\pre})  =  x_t + \left [ 0.5 x_t  -  1/\sigma^{\diamond}_t \times \epsilon_{ \bar \theta_{\pre}}(x_t,T-t)    \right] (\delta t) , \label{eq:vp}
\end{align}
where $(\delta t)$ denotes the discretization step.

\paragraph{Equivalence to DDPM.}

{ The objective function derived here is equivalent to the one formulated based on variational inference in the discretized formulation, which is commonly referred to as DDPMs 
 \citep{ho2020denoising}.} In DDPMs \citep{ho2020denoising}, we often see the following form:  
\begin{align*}
    \rho(x_t,t; \theta)=  \frac{1}{\sqrt{\alpha_t}}x_t - \frac{1-\alpha_t }{\sqrt{1- \bar \alpha_t }\sqrt{\alpha_t}}\epsilon_{ \theta}(x_t,T-t),\quad \bar \alpha_t = \prod_{k=1}^t \alpha_k. 
\end{align*}
When $\alpha_t =1-(\delta t)$, the above is equivalent to \eqref{eq:vp} when $\delta t$ goes to $0$ by noting $1/\sqrt{\alpha_t}\approx 1 + 0.5(\delta t)$.

\subsection{Fine-Tuning Diffusion Models with RL} 

Importantly, our focus on RL-based fine-tuning distinguishes itself from the standard fine-tuning methods. Standard fine-tuning typically involves scenarios where we have pre-trained models (e.g., diffusion models) and new training data $\{x^{(i)},y^{(i)}\}$.  In such cases, the common approach for fine-tuning is to retrain diffusion models with the new training data using the same loss function employed during pre-training. In sharp contrast, RL-based fine-tuning directly employs the downstream reward functions as the primary optimization objectives, making the loss functions different from those used in pre-training. 

{ Hereafter, we start with a concise overview of RL-based fine-tuning. Then, before delving into specifics, we discuss simpler non-RL alternatives to provide motivation for adopting RL-based fine-tuning. }

\subsubsection{Brief Overview: Fine-tuning with RL}\label{sec:formalized_goal}

In this article, we explore the fine-tuning of pre-trained diffusion models to optimize downstream reward functions $r:\mathbb{R}^d \rightarrow \mathbb{R}$. In domains such as images, these backbone diffusion models to be fine-tuned include Stable Diffusion~\citep{Rombach_2022_CVPR}, while the reward functions are aesthetic scores and alignment scores \citep{clark2023directly,black2023training,fan2023dpok}. More examples are detailed in the introduction. 
These rewards are often unknown, necessitating learning from data with feedback: $\{x^{(i)},r(x^{(i)})\}$. We will explore this aspect further in Section~\ref{sec:settings}. Until then, we assume $r$ is known.

{ Now, readers may wonder about the objectives we aim to achieve during the fine-tuning process. A natural approach is to define the optimization problem:  
\begin{align}\label{eq:RL_goal}
  \argmax_{q\in \Delta(\Xcal)}  \EE_{x \sim q}[r(x)]
\end{align}
where $q$ is initialized with a pre-trained diffusion model $p^{\pre} \in \Delta(\Xcal)$. In this tutorial, we will detail the procedure of solving \eqref{eq:RL_goal} with RL in the upcoming sections. In essence, we leverage the fact that diffusion models are formulated as a sequential decision-making problem, where each decision corresponds to how samples are denoised. 
} 

{ Although the above objective function \pref{eq:RL_goal} is reasonable, the resulting distribution might deviate too much from the pre-trained diffusion model. } To circumvent this issue, a natural way is to add penalization against pre-trained diffusion models. Then, the target distribution is defined as:
\begin{align}\label{eq:original_target}
\argmax_{q\in \Delta(\Xcal)} \EE_{x \sim q}[r(x)] - \alpha \KL(q\| p^{\pre}).
\end{align}
Notably, \eqref{eq:original_target} reduces to the following distribution:
\begin{align}\label{eq:original_target2}
p_{r}(\cdot):=\frac{\exp(r(\cdot)/\alpha )p^{\pre}(\cdot)}{\int \exp(r(x)/\alpha)p^{\pre}(x)dx}.
\end{align}
Here, the first term in \eqref{eq:original_target} corresponds to the mean reward, which we want to optimize in the fine-tuning process. The second term in \eqref{eq:original_target2} serves as a penalty term, indicating the deviation of $q$ from the pre-trained model. The parameter $\alpha$ controls the strength of this regularization term. The proper choice of $\alpha$ depends on the task we are interested in.

\subsubsection{Motivation for Using RL over Non-RL Alternatives}

 To achieve our goal of maximizing downstream reward functions with diffusion models, readers may question whether alternative approaches can be employed apart from RL. Here, we investigate these potential alternatives and explain why RL approaches may offer advantages over them. 

\paragraph{Rejection sampling.}  One approach involves generating multiple samples from pre-trained diffusion models and selecting only those with high rewards. This method, called rejection sampling, operates without needing fine-tuning. However, rejection sampling is effective primarily when the pre-trained model already has a high probability of producing high-reward samples. It resembles sampling from a prior distribution to obtain posterior samples (in this case, high-reward points). This approach works efficiently when the posterior closely matches the prior but can become highly inefficient otherwise.  In contrast, by explicitly updating weight in diffusion models, RL-based fine-tuning allows us to obtain these high-reward samples, which are seldom generated by pre-trained models. 

{
\paragraph{Conditional diffusion models (classifier-free guidance).} In conditional generative models, the general goal is to sample from $p(x|c)$, where $x$ is the output and $c$ denotes the conditioning variable. For example, in text-to-language diffusion models, $c$ is a text, and $x$ is the generated image. Similarly, in the context of protein engineering for addressing inverse folding problems, models often define $c$ as the protein backbone structure and $x$ as the corresponding amino acid sequence. Here, using the training data $\{c^{(i)},  x^{(i)}\}$, the model is trained by using the loss function: 
\begin{align*}
   \argmin_{\theta} \sum_{i} \EE_{t\sim \mathrm{Uni}([0,T]),x^{(i)}_t \sim \mu^{\diamond}_t x^{(i)}_0 + \epsilon \sigma^{\diamond}_t, , \epsilon \sim \Ncal(0,\mathbb{I})}[  \|\epsilon - \epsilon_{\theta}(x^{(i)}_t,c^{(i)}, t) \| ^2      ], 
\end{align*}
where the denoising function $\epsilon_{\theta}$ additionally receives the conditioning information $c^{(i)}$ as input. In practice, a variety of improvements such as classifier-free guidance \citep{ho2022classifier} can further improve the model's ability to learn the conditional distribution $p(x|c)$.

These conditional generative models can be used to optimize down-stream rewards by conditioning on the reward values, then sampling $x$ conditioned on high reward values \citep{krishnamoorthy2023diffusion,yuan2023reward}. While this method is, in principle, capable of generating plausible $x$ values across a range of reward levels within the training data distribution, it is not the most effective optimization strategy. This is primarily because high-reward inputs frequently reside in the tails of the training distribution or even beyond it. Consequently, this method may not effectively generate high-reward samples that lie outside the training data distribution. In contrast, RL-based fine-tuning has the capability to generate samples with higher rewards beyond the training data. This is achieved by explicitly maximizing reward models learned from the training data and leveraging their extrapolative capabilities of reward models, as theoretically formalized and empirically observed in \citet{uehara2024bridging}.  } 

\paragraph{Reward-weighted training.} Another alternative approach is to use a reward-weighted version of the standard training loss for diffusion models. Suppose that we have data $\{x^{(i)},r(x^{(i)})\}$. Then, after learning a reward $\hat r:\Xcal \to \RR$ with regression from the data, to achieve our goal, it looks natural to use a reward-weighted version of the training loss for diffusion models \eqref{eq:loss_score}, i.e., 
\begin{align*}
    \argmin_{\theta} \sum_{i} \EE_{t\sim \mathrm{Uni}([0,T]),x^{(i)}_t \sim \mu^{\diamond}_t x^{(i)}_0 + \epsilon \sigma^{\diamond}_t, \epsilon \sim \Ncal(0,\mathbb{I})}[\hat r(x^{(i)}_0)  \|\epsilon - \epsilon_{\theta}(x^{(i)}_t,t) \| ^2      ]. 
\end{align*}

There are two potential drawbacks to this approach. First, in practice, it may struggle to generate samples with higher rewards beyond the training data. As we will explain later, many RL algorithms are more directly focused on optimizing reward functions, which are expected to excel in obtaining samples with high rewards not observed in the original data, { as empirically observed in \citet{black2023training}.} Second, when fine-tuning a conditional diffusion model $p(x|c)$, the alternative approach here requires a pair of $\{c^{(i)},x^{(i)}\}$ during fine-tuning to ensure the validity of the loss function. When we only have data $\{x^{(i)},r(x^{(i)})\}$ but not $\{c^{(i)},x^{(i)},r(x^{(i)})\}$, this implies that we might need to solve an inverse problem from $x$ to $c$, which can often be challenging. In contrast, in these scenarios, RL algorithms, which we will introduce later, can operate without needing such pairs $\{c^{(i)},x^{(i)}\}$, as long as we have learned reward functions $\hat r$.   

{ Finally, it should be noted that reward-weighted training technically falls under the broader category of RL methods. It shares a close connection with ``reward-weighted MLE'' introduced in  \pref{sec:reward-weighted}, as discussed later. Employing this reward-weighted MLE helps address the second concern of ``reward-weighted training'' mentioned earlier. }

\section{Brief Overview of Entropy-Regularized MDPs} \label{sec:overview}

{ In this tutorial, we explain how fine-tuning diffusion models can be naturally formulated as an RL problem in entropy-regularized MDPs. This perspective is natural because RL involves sequential decision-making, and a diffusion model is formulated as a sequential problem where each denoising step is a decision-making process. To connect diffusion models with RL, we begin with a concise overview of RL in standard entropy-regularized MDPs \citep{haarnoja2017reinforcement,neu2017unified,geist2019theory,schulman2017equivalence}. }

\subsection{MDPs}

An MDP is defined as follows: $\{ \Scal, \Acal, \{P^{\mathrm{tra}}_t\}_{t=0}^T, \{r_t\}_{t=0}^T,p_0  \} $ where $\Scal$ is the state space, $\Acal$ is the action space, $P^{\mathrm{tra}}_t$ is a transition dynamic mapping: $\Scal \times \Acal \to \Delta(\Scal)$, $r_t:\Scal \times \Acal \to \RR$ denotes reward received at $t$ and $p_0$ is an initial distribution over $\Scal$. A policy $\pi_t:\Scal \to \Delta(\Acal)$ is a map from any state $s \in \Scal$ to the distribution over actions. The standard goal in RL is to solve  
\begin{align}\label{eq:standard_RL}
   \argmax_{\{\pi_t\}} \EE_{\{\pi_t\} }\left [\sum_{t=0}^T r_t(s_t,a_t) \right]
\end{align}
where $\EE_{\{\pi_t\}}[\cdot]$ is the expectation induced both policy $\pi$ and the transition dynamics as follows: $s_0 \sim p_0, a_0 \sim \pi_0(\cdot|s_0), s_1 \sim P^{\mathrm{tra}}_0(\cdot|s_0,a_0),\cdots$.  { As we will soon detail in the next section (\pref{sec:entropy}), diffusion models can naturally be framed as MDPs as each policy corresponds to a denoising process in diffusion models. }    

In entropy-regularized MDPs, we consider the following regularized objective instead:
\begin{align}\label{eq:original_goal}
 \{ \pi^{\star}_t\} = \argmax_{\{\pi_t\}} \EE_{\{\pi_t\}}\left [\sum_{t=0}^T r_t(s_t,a_t) - \alpha \KL(\pi_t(\cdot|s_t),\pi'_t(\cdot|s_t) ) \right ]
\end{align}
where $\pi': \Scal \to \Delta(\Acal)$ is a certain reference policy. The arg max solution is often called a set of soft optimal policies. { Compared to a standard objective \eqref{eq:standard_RL}, here we add KL terms against reference policies. This addition aims to ensure that soft optimal policies closely align with the reference policies. In the context of fine-tuning diffusion models, these reference policies correspond to the pre-trained diffusion models, as we aim to maintain similarity between the fine-tuned and pre-trained models.  }

This entropy-regularized objective in \eqref{eq:original_goal} has been widely employed in RL literature due to several benefits \citep{levine2018reinforcement}. For instance, in online RL, it is known that these policies have good exploration properties by setting reference policies as uniform policies \citep{fox2015taming,haarnoja2017reinforcement}. In offline RL, \citet{wu2019behavior} suggests using these policies as conservative policies by setting reference policies close to behavior policies (policies used to collect offline data). Additionally, in inverse RL, this soft optimal policy is used as an expert policy in scenarios where rewards are unobservable, only trajectories from expert policies are available (typically referred to as maximum entropy RL as \citet{ziebart2008maximum,wulfmeier2015maximum,finn2016guided}).

{ \subsection{Key Concepts: Soft Q-functions, Soft  Bellman Equations. }  

The crucial question in RL is how to devise algorithms that effectively solve the optimization problem \eqref{eq:original_goal}. These algorithms are later used as fine-tuning algorithms of diffusion models. To see these algorithms, we rely on several critical concepts in entropy-regularized MDPs. Specifically, soft-optimal policies (i.e., solutions to \eqref{eq:original_goal}) can be expressed analytically as a blend of soft Q-functions and reference policies. Furthermore, these soft Q-functions are defined as solutions to equations known as soft Bellman equations. We elaborate on these foundational concepts below. 
}

\paragraph{Soft Q-functions and soft optimal policies.} { Soft optimal policies are expressed as a blend of soft Q-functions and reference policies.} To see it, we define the soft Q-function as follows:
\begin{align}\label{eq:soft-q-functions}
   q_t(s_t,a_t) = \EE_{\{\pi^{\star}_t\} }\left [\sum_{k=t}^T r_k(s_k,a_k) - \alpha   \KL(\pi^{\star}_{k+1}(\cdot|s_{k+1} )\|\pi'_{k+1} (\cdot|s_{k+1}))   |s_t,a_t \right ]. 
\end{align}
Then, by comparing \eqref{eq:soft-q-functions} and \eqref{eq:original_goal}, we clearly have 
\begin{align}\label{eq:soft-optimal-q}
   \pi^{\star}_t =  \argmax_{\pi \in [\Xcal \to \Delta(\Xcal) ] } \EE_{a_t \sim \pi(s_t)} [q_t(s_t,a_t) - \alpha \KL(\pi(\cdot |s_t) \| \pi'_t(\cdot\mid s_t) |s_t ]. 
\end{align}
Hence, by calculating the above explicitly, a soft optimal policy in \eqref{eq:original_goal} is described as follows:
\begin{align}\label{eq:sample}
    \pi^{\star}_t(\cdot |s) \propto \frac{\exp(q_t(s, \cdot )/\alpha) \pi'_t(\cdot |s) }{ \int \exp(q_t(s,a)/\alpha)\pi'_t(a |s)\mathrm{d}a } 
\end{align}

\paragraph{Soft Bellman equations.} { We have already defined soft Q-functions in \eqref{eq:soft-q-functions}. However, this form includes the soft optimal policies.} Actually, without using soft optimal policies, the soft Q-function satisfies the following recursive equation (a.k.a. soft Bellman equation):
\begin{align}\label{eq:soft_bellman}
    q_t(s_t,a_t) = \EE_{ \{\pi^{\star}_t\} }\left [r(s_t,a_t) + \alpha  \log \left \{  \int \exp(q_{t+1} (s_{t+1} ,a)/\alpha) \pi'_t(a |s_{t+1 })\mathrm{d}a \right \} \mid s_t, a_t \right ]. 
\end{align}
This is proven by noting we recursively have 
\begin{align*}
    q_t(s_t,a_t)=\EE_{ \{ \pi^{\star}_t\} }[r_t (s_t,a_t) + q_{t+1}(s_{t+1},a_{t+1}) - \alpha \KL(\pi^{\star}_{t+1}(\cdot|s_{t+1}),\pi'_{t+1}(\cdot|s_{t+1} ))   |s_t,a_t ] 
\end{align*}
By substituting \eqref{eq:sample} into the above, we obtain the soft Bellman equation \eqref{eq:soft_bellman}. 

\paragraph{Soft value functions.} { So far, we have defined the soft Q-functions, which depend on both states and actions. We can now introduce a related concept that depends solely on states, termed the soft value function. } The soft value function is defined as follows:  
\begin{align*}
   v_t(s_t) = \EE_{\{\pi^{\star}_t\} }\left [\sum_{k=t}^T r_k(s_k,a_k) - \alpha   \KL(\pi^{\star}_{k}(\cdot|s_{k} )\|\pi'_{k} (\cdot|s_{k}))   |s_t\right ]. 
\end{align*}
Then, the soft optimal policy in \eqref{eq:soft-optimal-q} is also written as 
\begin{align}\label{eq:sample2}
    \pi^{\star}_t(\cdot |s) \propto \frac{\exp(q_t(s, \cdot )/\alpha) \pi'_t(\cdot |s) }{\exp(v_t(s)/\alpha )  }  
\end{align}
because we have 
\begin{align*}
    \exp\left(\frac{v_t(s)}{\alpha} \right) = \int  \exp\left(\frac{q_t(s,a)}{\alpha} \right)\pi'_t(a\mid s)\mathrm{d}a.  
\end{align*}
Then, substituting the above in the soft Bellman equation \eqref{eq:soft_bellman}, it is written as 
\begin{align*}
    q_t(s_t,a_t)= \EE_{ \{\pi^{\star}_t\} }[r(s_t,a_t )+   v_{t+1}(s_{t+1})  |s_t,a_t ]. 
\end{align*}

\paragraph{Algorithms in entropy-regularized MDPs.} 

{ %
As outlined in \citet{levine2018reinforcement}, to solve \eqref{eq:original_goal}, various well-known algorithms exist in the literature on RL. The abovementioned concepts are useful in constructing these algorithms. These include policy gradients, which gradually optimize a policy using a policy neural network; soft Q-learning algorithms, which utilize the soft-Bellman equation and approximate the soft-value function with a value neural network; and soft actor-critic algorithms that leverage both policy and value neural networks. We will explore how these algorithms can be applied in the context of diffusion models shortly in \pref{sec:sum_planning} and \ref{sec:sum_planning_conservative}.}

\section{Fine-Tuning Diffusion Models with RL in Entropy Regularized MDPs} \label{sec:entropy}

 { In this section, as done in \citet{fan2023dpok,black2023training,uehara2024bridging}, we illustrate how fine-tuning can be formulated as an RL problem in soft-entropy regularized MDPs, where each denoising step of diffusion models corresponds to a policy in RL. Finally, we outline a specific RL problem of interest in our context. }

\begin{figure}[!th]
\centering
\begin{tikzpicture}[%
>=latex',node distance=2.5cm, minimum height=1.5cm, minimum width=1.5cm,
state/.style={draw, shape=circle, draw=orange, fill=orange!10, line width=0.5pt},
state-emp/.style={draw, shape=rectangle, draw=red, fill=red!10, line width=0.5pt,opacity=0.0},
reward/.style={draw, shape=rectangle, draw=blue, fill=blue!10, line width=0.5pt}
]
\node[state] (xT) at (0,0) {$x_{T}$};
\node[state,right of=xT] (xT1) {$x_{T-1}$};
\node[state,right of=xT1] (xT2) {$x_{T-2}$};
\node[state-emp,right of=xT2] (xT3) {...};
\node[state] (x1) at (10,0) {$x_{1}$};
\node[state,right of=x1] (x0) {$x_{0}$};
\node[reward,right of=x0] (r0) {$r$};
\draw[->] (xT) -- (xT1);
\node[text width=2cm] at (1.8,0.2) {$p_T$};
\draw[->] (xT1) -- (xT2);
\node[text width=2cm] at (4.3,0.2) {$p_{T-1}$};
\draw[->] (xT2) -- (xT3);
\node[text width=2cm] at (6.8,0.2) {$p_{T-2}$};
\draw[->] (xT3) -- (x1);
\node[text width=2cm] at (9.5,0.2) {$p_{2}$};
\draw[->] (x1) -- (x0);
\node[text width=2cm] at (12.0,0.2) {$p_{1}$};
\draw[dashed,->] (x0) -- (r0);
\end{tikzpicture}
\caption{Formulating fine-tuning in diffusion models using MDPs.}
\end{figure}
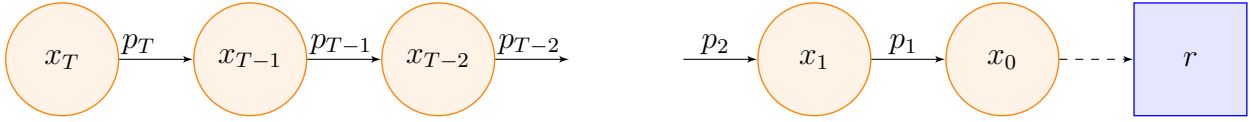

{To cast fine-tuning diffusion models as an RL problem}, we start with defining the following MDP:
\begin{itemize}
    \item The state space $\Scal$ and action space $\Acal$ correspond to the input space $\Xcal$.
    \item The transition dynamics at time $t$ (i.e., $P_t$) is an identity map $\delta(s_{t+1}={a_t})$.
    \item The reward at time $t \in[0,\cdots,T]$ (i.e., $r_t$) is provided only at $T$ as $r$ (down-stream reward function); but $0$ at other time steps. 
    \item The policy at time $t$ (i.e, $\pi_t$) corresponds to $p_{T+1-t}:\Xcal \to \Delta(\Xcal)$.
    \item The initial distribution at time $0$ corresponds to $p_{T+1}\in \Delta(\Xcal)$. With slight abuse of notation, we often denote it by $p_{T+1}(\cdot|\cdot)$, while this is just $p_{T+1}(\cdot)$. 
    \item The reference policy at $t$ (i.e., $\pi'_t$) corresponds to a denoising process in the pre-trained model $p^{\pre}_{T+1-t}$. 
\end{itemize} 
We list several things to note.
\begin{itemize}
    \item We reverse the time-evolving process to adhere to the standard notation in diffusion models, i.e., from $t=T$ to $t=0$. Hence, $s_t$ in standard MDPs corresponds to $x_{T+1-t}$ in diffusion models. 
    \item In our context, unlike standard RL scenarios, the transition dynamics are known.
\end{itemize}

\paragraph{Key RL Problem.}
Now, by reformulating the original objective of standard RL into our contexts, the objective function in \eqref{eq:original_goal} reduces to the following: 
{\color{red}\begin{align}\label{eq:key_plnanning}
       \{p^{\star}_t\}_t=   \argmax_{\{p_t \in [\RR^d \to \Delta(\RR^d)] \}_{t=T+1}^1 }\underbrace{\EE_{\{p_t\}}[r(x_0)]}_{\text{Reward}}   - \alpha \underbrace{\Sigma_{t=T+1}^1  \EE_{\{p_t\}}[\mathrm{KL}(p_t(\cdot|x_t)\|p^{\pre}_t (\cdot |x_t))  ]}_{\text{KL\,penalty} }
\end{align}
} 
where the expectation $\EE_{\{p_t\} }[\cdot]$ is taken with respect to $\prod_{t=T+1}^1 p_t(x_{t-1}|x_{t})$, i.e., $x_{T} \sim p_{T+1}(\cdot),x_{T-1}\sim p_{T-1}(\cdot \mid x_{T-1}),x_{T-2}\sim p_{T-2}(\cdot \mid x_{T-2}),\cdots $. In this article, we set this as an objective function in fine-tuning diffusion models. { This objective is natural as it seeks to optimize sequential denoising processes to maximize downstream rewards while maintaining proximity to pre-trained models. Subsequently, we investigate several algorithms to solve \eqref{eq:key_plnanning}. Before discussing these algorithms, we summarize several key theoretical properties that will aid their derivation. }

\section{Theory of RL-Based Fine-Tuning}   \label{sec:theory_RL}

{ So far, we have introduced a certain RL problem (i.e., \eqref{eq:key_plnanning}) as a fine-tuning diffusion model. In this section, we explain that solving this RL problem allows us to achieve the target distribution discussed in \pref{sec:formalized_goal}. Additionally, we present several important theoretical properties, such as the analytical form of marginal distributions and posterior distributions induced by fine-tuned models. This formulation is also instrumental in introducing several algorithms (reward-weighted MLE, value-weighted sampling, and path consistency learning in \pref{sec:sum_planning_conservative}), and establishing connections with related areas (classifier guidance in \pref{sec:connection}, and flow-based diffusion models in \pref{sec:flow_matching}). We start with several key concepts.}

\subsection{Key Concepts: Soft Value functions and Soft Bellman Equations.}

{ Now, reflecting on how soft optimal policies are expressed using soft value functions in \pref{sec:overview} in the context of standard RL problems, we derive several important concepts applicable to fine-tuning diffusion models. These concepts are later useful in constructing algorithms to solve our RL problem \eqref{eq:key_plnanning}.  }  

Firstly, as we see in \eqref{eq:sample}, soft-optimal policies are characterized as:
{\color{red}\begin{align}\label{eq:diffusion_soft}
      p^{\star}_{t}(\cdot |x_t)= \frac{\exp( v_{t-1}(\cdot )/\alpha) p^{\pre}_t(\cdot \mid x_t) }{\int \exp( v_{t-1}(x_{t-1} )/\alpha) p^{\pre}_t(x_{t-1} \mid x_t) d x_{t-1}  }
\end{align}
} 
where soft-value functions are defined as 
\begin{align*}
    v_{t}(x_t) &=  \EE_{\{p^{\star}_t \}}[r(x_0) - \alpha  \sum_{k=t}^1 \mathrm{KL}(p_k(\cdot|x_k)\|p^{\pre}_k (\cdot |x_k))  |x_t ], \\ 
    q_{t}(x_t,x_{t-1}) &=  \EE_{\{p^{\star}_t \}}[r(x_0) - \alpha  \sum_{k=t+1}^1 \mathrm{KL}(p_k(\cdot|x_k)\|p^{\pre}_k (\cdot |x_k))  |x_t,x_{t-1}]= v_{t-1}(x_{t-1}).  
 \end{align*}

Secondly, as we see in \eqref{eq:soft_bellman}, the soft-value functions are also recursively defined by the soft Bellman equations: 
\begin{align}\label{eq:soft_bellman_diffusion}
\begin{cases}
 \exp\left (\frac{ v_{t}(x_{t})}{\alpha} \right )= \int \exp\left (\frac{ v_{t-1}(x_{t-1})}{\alpha}\right)p^{\pre}_{t}(x_{t-1} \mid x_{t}) d x_{t-1}\,(t=T+1,\cdots,1),   \\ 
v_0(x_0)= r(x_0). 
\end{cases}
\end{align}
Now substituting the above in \eqref{eq:diffusion_soft}, we obtain 
\begin{align*}
      p^{\star}_{t}(\cdot |x_t)=  \frac{\exp( v_{t-1}(\cdot )/\alpha) p^{\pre}_t(\cdot \mid x_t) }{ \exp( v_{t}(x_t )/\alpha)  }. 
\end{align*}
{As mentioned earlier, these soft value functions and their recursive form will later serve as the basis for constructing several concrete fine-tuning algorithms (such as reward-weighted MLE and value-weighted sampling in  \pref{sec:sum_planning_conservative}).  }

\subsection{Induced Distributions after Fine-Tuning}

Now, with the above preparation, we can show that the induced distribution derived by this soft optimal policy is actually equal to our target distribution.

\begin{theorem}[Theorem 1 in \citet{uehara2024bridging}] \label{thm:main}
Let $p^{\star}(\cdot)$ be an induced distribution at time $0$ from optimal policies $\{p^{\star}_t\}_{t=T+1}^1$, i.e., $p^{\star}(x_0) = \int \{\prod_{t=T+1}^1 p^{\star}_t(x_{t-1}|x_t)\}d x_{1:T}$. The distribution $p^{\star}$ is equal to the target distribution~\eqref{eq:original_target2}, i.e.,   
\begin{align*}
    p^{\star}(x)= p_r(x).  
\end{align*}
\end{theorem}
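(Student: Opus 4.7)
\medskip

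\noindent\textbf{Proof proposal.} The plan is to work directly with the joint distribution $p^{\star}(x_{0:T}) = p^{\star}_{T+1}(x_T) \prod_{t=T}^{1} p^{\star}_t(x_{t-1} \mid x_t)$, substitute the closed-form expression for each soft optimal policy, and exhibit an exact telescoping across the time steps.

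First, I would plug in the formula
\[
p^{\star}_t(x_{t-1} \mid x_t) \;=\; \frac{\exp(v_{t-1}(x_{t-1})/\alpha)\, p^{\pre}_t(x_{t-1} \mid x_t)}{\exp(v_t(x_t)/\alpha)}
\]
for each $t \in \{1,\dots,T\}$, together with the analogous identity for $t = T+1$ (treating $p^{\pre}_{T+1}$ as an unconditional distribution and $v_{T+1}$ as the corresponding normalizing constant). The product of the numerator exponentials yields $\prod_{t=0}^{T} \exp(v_t(x_t)/\alpha)$ while the product of the denominator exponentials yields $\prod_{t=1}^{T+1} \exp(v_t(x_t)/\alpha)$; after cancellation only $\exp(v_0(x_0)/\alpha)$ in the numerator and $\exp(v_{T+1}/\alpha)$ in the denominator survive. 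The $p^{\pre}$ factors reassemble into the joint pre-trained distribution $p^{\pre}(x_{0:T})$, giving
\[
p^{\star}(x_{0:T}) \;=\; \frac{\exp(v_0(x_0)/\alpha)}{\exp(v_{T+1}/\alpha)}\, p^{\pre}(x_{0:T}).
\]

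Next, I would invoke the boundary condition $v_0(x_0) = r(x_0)$ from the soft Bellman recursion \eqref{eq:soft_bellman_diffusion}, and then identify the denominator with the correct partition function by iterating the Bellman equation from $t = T+1$ downward. Each application replaces $\exp(v_t(x_t)/\alpha)$ by an integral against $p^{\pre}_t(x_{t-1}\mid x_t)$, and the telescoping integral collapses to
\[
\exp(v_{T+1}/\alpha) \;=\; \int \exp(r(x_0)/\alpha)\, p^{\pre}(x_0)\, dx_0,
\]
which is exactly the normalizer of $p_r$ in \eqref{eq:original_target2}. Finally, marginalizing $p^{\star}(x_{0:T})$ over $x_{1:T}$ leaves $\exp(r(x_0)/\alpha)\, p^{\pre}(x_0)/\exp(v_{T+1}/\alpha)$, matching $p_r(x_0)$.

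I do not anticipate a real obstacle; the whole argument is essentially algebraic telescoping glued together by the soft Bellman equation. The only step that requires a bit of care is the boundary at $t=T+1$: one must treat $p^{\pre}_{T+1}$ as a degenerate transition (independent of its conditioning argument) and verify that the ``policy'' formula for $p^{\star}_{T+1}$ still follows from the same soft optimality argument as in \eqref{eq:sample2}. Once this bookkeeping is consistent, both the telescoping of the policy ratios and the telescoping of the Bellman recursion go through cleanly and yield $p^{\star}(x) = p_r(x)$.
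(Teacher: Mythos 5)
Your proposal is correct and follows exactly the argument the paper's machinery is set up for: the paper defers the proof of \pref{thm:main} to \citet{uehara2024bridging}, but the closed form $p^{\star}_t(\cdot\mid x_t)=\exp(v_{t-1}(\cdot)/\alpha)p^{\pre}_t(\cdot\mid x_t)/\exp(v_t(x_t)/\alpha)$ together with the soft Bellman recursion \eqref{eq:soft_bellman_diffusion} (boundary $v_0=r$) is precisely what makes your telescoping of the joint and identification of $\exp(v_{T+1}/\alpha)$ with the partition function go through. Your one flagged bookkeeping point --- treating $p^{\pre}_{T+1}$ as a degenerate, unconditional transition --- is already sanctioned by the paper's own convention for the initial distribution, so nothing is missing.
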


This theorem states that after solving \eqref{eq:key_plnanning} and obtaining soft optimal policies, we can sample from the target distribution by sequentially running policies from $p^{\star}_{T+1}$ to $p^{\star}_1$. Thus, \eqref{eq:key_plnanning} serves as a natural objective for fine-tuning. { This fact is also useful in deriving a connection with classifier guidance in \pref{sec:connection}. } 

\paragraph{Marginal distributions.} We can derive the marginal distribution as follows. 

\begin{theorem}[Theorem 2 in \citet{uehara2024bridging} ]\label{thm:key2}
Let $p^{\star}_t(x_t)$ be the marginal distributions at $t$ induced by soft-optimal policies $\{p^{\star}_t\}_{t=T+1}^1$, i.e., $p^{\star}_t(x_t) = \int \{\prod_{k=T+1}^{t+1} p^{\star}_k(x_{k-1}|x_k)\}d x_{t+1:T}$.
Then, 
\begin{align*}
    p^{\star}_t(x_t) =  \frac{\exp(v_{t}(x_{t})/\alpha)p^{\pre}_{t}(x_{t})}{C},
\end{align*}
where $v_t(\cdot)$ is the soft-value function. %
\end{theorem}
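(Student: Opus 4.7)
The plan is to establish Theorem~\ref{thm:key2} by backward induction on $t$, running from $t=T$ down to $t=0$, with induction hypothesis
\[
p^{\star}_t(x_t) \;=\; \frac{\exp(v_t(x_t)/\alpha)\, p^{\pre}_t(x_t)}{C},
\]
where $C$ is a time-independent normalizing constant. The key observation driving the argument is that the soft optimal policy $p^{\star}_t(x_{t-1}\mid x_t)$ carries an $\exp(v_{t-1}(x_{t-1})/\alpha)$ factor in its numerator and an $\exp(v_t(x_t)/\alpha)$ factor in its denominator, and the latter cancels exactly against the value-function factor in the inductive hypothesis for $p^{\star}_t(x_t)$.

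For the base case at $t=T$, I would use the fact that the marginal of $x_T$ is set by the initial policy $p^{\star}_{T+1}$. Applying formula \eqref{eq:diffusion_soft} at $t=T+1$, with the convention (made explicit in Section 3) that $p^{\pre}_{T+1}(\,\cdot\mid x_{T+1})$ is really just the initial noise distribution $p^{\pre}_{T+1}(\,\cdot)=p^{\pre}_T(\,\cdot)$, I obtain $p^{\star}_T(x_T)=\exp(v_T(x_T)/\alpha)\,p^{\pre}_T(x_T)/C$ with $C=\int\exp(v_T(x_T)/\alpha)\,p^{\pre}_T(x_T)\,dx_T$, which is just the normalizer forced by $\int p^{\star}_T=1$.

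For the inductive step, assuming the claim at time $t$, I would compute
\begin{align*}
p^{\star}_{t-1}(x_{t-1})
&=\int p^{\star}_t(x_{t-1}\mid x_t)\, p^{\star}_t(x_t)\, dx_t \\
&=\int \frac{\exp(v_{t-1}(x_{t-1})/\alpha)\, p^{\pre}_t(x_{t-1}\mid x_t)}{\exp(v_t(x_t)/\alpha)}\cdot \frac{\exp(v_t(x_t)/\alpha)\, p^{\pre}_t(x_t)}{C}\, dx_t \\
&=\frac{\exp(v_{t-1}(x_{t-1})/\alpha)}{C}\int p^{\pre}_t(x_{t-1}\mid x_t)\, p^{\pre}_t(x_t)\, dx_t \\
&=\frac{\exp(v_{t-1}(x_{t-1})/\alpha)\, p^{\pre}_{t-1}(x_{t-1})}{C},
\end{align*}
using the closed form \eqref{eq:diffusion_soft} for the soft optimal policy in the second line, the induction hypothesis in the same line, and in the last line the identity $p^{\pre}_{t-1}(x_{t-1})=\int p^{\pre}_t(x_{t-1}\mid x_t)\, p^{\pre}_t(x_t)\,dx_t$, which is just the definition of the pre-trained marginal one denoising step further along.

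The only subtle point, and what I would treat as the main bookkeeping obstacle, is to check that the constant $C$ appearing in the induction is genuinely the same at every $t$, so that the representation above is consistent with $\int p^{\star}_t(x_t)\,dx_t=1$ for all $t$. This is verified by a direct calculation using the soft Bellman equation \eqref{eq:soft_bellman_diffusion}:
\begin{align*}
\int \exp(v_t(x_t)/\alpha)\, p^{\pre}_t(x_t)\, dx_t
&=\int\!\!\int \exp(v_{t-1}(x_{t-1})/\alpha)\, p^{\pre}_t(x_{t-1}\mid x_t)\, dx_{t-1}\, p^{\pre}_t(x_t)\, dx_t\\
&=\int \exp(v_{t-1}(x_{t-1})/\alpha)\, p^{\pre}_{t-1}(x_{t-1})\, dx_{t-1},
\end{align*}
so this integral is independent of $t$. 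Specializing to $t=0$ gives $C=\int \exp(r(x_0)/\alpha)\, p^{\pre}(x_0)\, dx_0$, which is precisely the normalizer of the target distribution in \eqref{eq:original_target2} and also reproduces Theorem~\ref{thm:main} as the $t=0$ slice of Theorem~\ref{thm:key2}.
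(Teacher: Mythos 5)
Your proof is correct: the backward induction with the $\exp(v_t(x_t)/\alpha)$ cancellation is exactly the telescoping argument behind this result, and your verification via the soft Bellman equation that $C$ is time-independent (specializing to $C=\int\exp(r(x_0)/\alpha)p^{\pre}(x_0)dx_0$ at $t=0$, recovering Theorem~\ref{thm:main}) is the right way to close the bookkeeping. Note that the paper itself states this theorem without proof, deferring to the cited reference, whose argument is essentially the same telescoping computation you carried out.
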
 
Interestingly, the normalizing constant is independent of $t$ in the above theorem.

\paragraph{Posterior distributions.} We can derive the posterior distribution as follows.

\begin{theorem}[Theorem 3 in \citet{uehara2024bridging}] \label{thm:key3}
Denote the posterior distribution of $x_t$ given $x_{t-1}$ for the distribution induced by soft-optimal policies $\{p^{\star}_t\}_{t=T+1}^1$ by $\{p^{\star}\}^b(\cdot \mid \cdot)$. We define the analogous objective for a pre-trained policy and denote it by  $p^{\pre}_t(\cdot \mid \cdot)$. Then, we get 
\begin{align*}
    \{p^{\star}\}^b_t(x_{t}|x_{t-1}) = p^{\pre}_t(x_{t}|x_{t-1}). 
\end{align*}
\end{theorem}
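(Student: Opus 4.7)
The plan is to compute $\{p^{\star}\}^b_t(x_t\mid x_{t-1})$ directly via Bayes' rule applied to the joint distribution induced by the soft-optimal policies, and then verify that the value-function reweightings cancel, leaving exactly the pre-trained posterior. Concretely, I would start from
\begin{align*}
\{p^{\star}\}^b_t(x_t\mid x_{t-1}) \;=\; \frac{p^{\star}_t(x_t)\, p^{\star}_t(x_{t-1}\mid x_t)}{p^{\star}_{t-1}(x_{t-1})},
\end{align*}
where $p^{\star}_t$ and $p^{\star}_{t-1}$ are the marginals from Theorem~\ref{thm:key2} and $p^{\star}_t(x_{t-1}\mid x_t)$ is the soft-optimal forward denoising kernel.

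The next step is substitution. By Theorem~\ref{thm:key2}, $p^{\star}_t(x_t) = \exp(v_t(x_t)/\alpha)\,p^{\pre}_t(x_t)/C$ and $p^{\star}_{t-1}(x_{t-1}) = \exp(v_{t-1}(x_{t-1})/\alpha)\,p^{\pre}_{t-1}(x_{t-1})/C$, with the same constant $C$ (this $t$-independence of $C$ is crucial). From the closed form derived just after the soft Bellman equation \eqref{eq:soft_bellman_diffusion}, the forward kernel satisfies $p^{\star}_t(x_{t-1}\mid x_t) = \exp(v_{t-1}(x_{t-1})/\alpha)\,p^{\pre}_t(x_{t-1}\mid x_t)/\exp(v_t(x_t)/\alpha)$. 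Inserting these three expressions, the two $\exp(v_{t-1}(x_{t-1})/\alpha)$ factors cancel between the numerator kernel and the denominator marginal, the two $\exp(v_t(x_t)/\alpha)$ factors cancel between the numerator marginal and the numerator kernel, and the two normalizers $C$ cancel between numerator and denominator, leaving
\begin{align*}
\{p^{\star}\}^b_t(x_t\mid x_{t-1}) \;=\; \frac{p^{\pre}_t(x_t)\, p^{\pre}_t(x_{t-1}\mid x_t)}{p^{\pre}_{t-1}(x_{t-1})}.
\end{align*}
Applying Bayes' rule in the reverse direction to the pre-trained joint, namely $p^{\pre}_t(x_t)\,p^{\pre}_t(x_{t-1}\mid x_t) = p^{\pre}_{t-1}(x_{t-1})\,p^{\pre}_t(x_t\mid x_{t-1})$, identifies the right-hand side as $p^{\pre}_t(x_t\mid x_{t-1})$, finishing the proof.

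There is no substantive obstacle: the result is essentially an algebraic corollary of Theorem~\ref{thm:key2} together with the explicit form of the soft-optimal denoising kernel, and the only real point to check is that the normalizer in Theorem~\ref{thm:key2} does not depend on $t$ (so that it cancels in the Bayes ratio). The conceptual content worth emphasizing in the write-up is that multiplying both the marginal at time $t$ and the forward kernel by matched Radon--Nikodym weights $\exp(v_t/\alpha)$ leaves the time-reversed kernel invariant; thus fine-tuning changes the generative forward dynamics but preserves the noising posterior exactly, which is what enables classifier-guidance-style interpretations later in the tutorial.
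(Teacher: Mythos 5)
Your proof is correct. The paper itself does not include a proof of this theorem (it defers to the cited reference), so there is nothing to compare line by line, but your Bayes-rule computation --- substituting the marginals from Theorem~\ref{thm:key2} and the soft-optimal kernel $p^{\star}_t(x_{t-1}\mid x_t)=\exp(v_{t-1}(x_{t-1})/\alpha)\,p^{\pre}_t(x_{t-1}\mid x_t)/\exp(v_t(x_t)/\alpha)$, cancelling the value-function weights, and re-applying Bayes to the pre-trained joint --- is exactly the detailed-balance algebra the paper runs in the reverse direction when it proves the 1-step consistency equation in \pref{sec:gflownet}, so your argument is the natural self-contained derivation. Your observation that the $t$-independence of the normalizer $C$ makes the cancellation clean is apt, though not strictly load-bearing: any residual constant $C_{t-1}/C_t$ would have to equal one anyway, since both sides must integrate to one over $x_t$.
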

This theorem indicates that after solving \eqref{eq:key_plnanning}, the posterior distribution induced by pre-trained models remains preserved. This property plays an important role in constructing PCL (path consistency learning) in \pref{sec:gflownet}. 

\begin{remark}[Continuous-time formulation]
For simplicity, our explanation is generally based on the discrete-time formulation. However, as training of diffusion models could be formulated in the continuous-time formulation \citep{song2021maximum}, we can still extend most of our discussion of fine-tuning in our tutorial in the continuous-time formulation. For example, the above Theorems are extended to the continuous time formulation in \citet{uehara2024finetuning}. 
\end{remark}

{\section{RL-Based Fine-Tuning Algorithms 1: Non-Distribution-Constrained Approaches}} \label{sec:sum_planning}

{So far, we have explained how to frame fine-tuning diffusion models as the RL problem in entropy-regularized MDPs. Moving forward, we summarize actual algorithms that can solve the RL problem of interest described by  Equation~\eqref{eq:key_plnanning}. In this section, we introduce two algorithms: PPO and direct reward backpropagation. 

These algorithms are originally designed to optimize reward functions directly, meaning they operate effectively even without entropy regularization (i.e., $\alpha=0$). Consequently, they are well-suited for generating samples with high rewards that may not be in the original training dataset. More distribution-constrained algorithms that align closely with pre-trained models will be discussed in the subsequent section (\pref{sec:sum_planning}). Therefore, we classify algorithms in this section (i.e., PPO and direct reward backpropagation) as non-distribution-constrained approaches. The whole summary is described in Table~\ref{tab:summary_algorithm}.       }

\begin{table}[!t]
    \centering
     \caption{Description of each RL algorithm for fine-tuning diffusion models (note that value-weighted sampling is technically not a fine-tuning algorithm.) Note (1) ``Without learning value functions'' refers to the capability of algorithms to directly utilize non-differentiable black-box reward feedback, bypassing the necessity to train differentiable reward functions, (2) { ``Distribution-constrained'' indicates that the algorithms are designed to maintain proximity to pre-trained models. Based on it, the practical recommendation of algorithms is summarized in \pref{fig:practical}.   } 
     }
    \label{tab:summary_algorithm}
    \begin{tabular}{p{0.3\textwidth}p{0.10\textwidth}p{0.15\textwidth}p{0.17\textwidth}p{0.10\textwidth}  } \toprule 
         &      Memory efficiency & Computational efficiency & Without learning value functions & Distribution-constrained \\   \midrule
   Soft PPO  &  $\checkmark$ &   &  $\checkmark$    &     \\  
      Reward backpropagation      &   & $\checkmark$  & $\checkmark$   &  \\
   Reward-weighted MLE & $\checkmark$  &  &  $\checkmark$ & $\checkmark$ \\
   Value-weighted sampling  & $\checkmark$ &  &  & $\checkmark$  \\ 
   Path consistency learning &  $\checkmark$ &   &  &   $\checkmark$  \\ \bottomrule 
    \end{tabular}
\end{table}

\subsection{Soft Proximal Policy Optimization (PPO) }\label{sec:PPO}

In order to solve Equation~\eqref{eq:key_plnanning}, \citet{fan2023dpok,clark2023directly} propose using PPO \citep{schulman2017proximal}. PPO has been widely used in RL, as well as, in the literature in fine-tuning LLMs, due to its stability and simplicity. In the standard context of RL, this is especially preferred over Q-learning when the action space is high-dimensional. 

The PPO algorithm is described in \pref{alg:ppo}. This is an iterative procedure of updating a parameter $\theta$. Each iteration comprises two steps: firstly, samples are generated by executing current policies to construct the loss function (inspired by policy gradient formulation); secondly, the parameter $\theta$ is updated by computing the gradient of the loss function. 

PPO offers several advantages. The approach is known for its stability and relatively straightforward implementation. Stability comes from the conservative parameter updates. Indeed, PPO builds upon TRPO \citep{schulman2015trust}, where parameters are conservatively updated with a KL penalty term (between $\theta_{s+1}$ and $\theta_s$) to prevent significant deviation from the current parameter. This gives us stability in the optimization landscape. Furthermore, in \pref{alg:ppo}, we do not necessarily need to rely on value functions, although they could be useful for variance reduction. As discussed in the subsequent subsection, this can be advantageous compared to other methods, especially since learning value functions can be challenging in high-dimensional spaces, particularly within the context of diffusion models.

\begin{algorithm}[!t]
\caption{Soft PPO}\label{alg:ppo}
\begin{algorithmic}[1]
     \STATE {\bf Require}: Pre-trained model $\{\Ncal(\rho(x_{t},t;\theta_{\pre}),\sigma^2(t))\}_{t=T+1}^1$, batch size $m$, parameter $\alpha \in \RR^+$, learning rate $\gamma$ 
     \STATE {\bf Initialize}: $\theta_1 = \theta_{\pre}$
     \FOR{$s \in [1,\cdots,S]$}
      \STATE Collect $m$ samples $\{x^{(i)}_t(\theta) \}_{t=T}^0$ from a current diffusion model (i.e., generating by sequentially running polices $\{\Ncal(\rho(x_{t},t;\theta),\sigma^2(t))\}_{t=T+1}^1$ from $t=T+1$ to $t=1$) 
      \STATE Update as follows (several times if needed): 
      {\small 
   \begin{align} \label{eq:natural}
    \theta_{s+1} \leftarrow \theta_s -  \frac{\gamma}{m}\nabla_{\theta}\sum_{t=T+1}^1 \sum_{i=1}^m &  \left[ \min\left \{\tilde r_t(x^{(i)}_0,x^{(i)}_{t})\frac{p(x^{(i)}_{t-1}|x^{(i)}_{t};\theta)}{p(x^{(i)}_{t-1}|x^{(i)}_{t};\theta_{s})}, \right.  \right.  \\
     &  \left. \left. \tilde r_t(x^{(i)}_0,x^{(i)}_t) \cdot \mathrm{Clip}\left (\frac{p(x^{(i)}_{t-1}|x^{(i)}_{t};\theta)}{p(x^{(i)}_{t-1}|x^{(i)}_{t};\theta_{s})}, 1-\epsilon,1+\epsilon\right ) \right \} \right] |_{\theta = \theta_s},  \nonumber 
 \end{align}
 where 
    \begin{align*} 
   \tilde r_t(x_0,x_t) := - r(x_0) +  \underbrace{\alpha \frac{\|\rho (x_t,t;\theta) -\rho (x_t,t;\theta^{\pre}) \|^2 }{2 \sigma^2(t)}}_{\text{KL term}}. \nonumber 
\end{align*}
} 
      \ENDFOR
  \STATE {\bf Output}: Policy $\{p_t(\cdot \mid \cdot; \theta_S) \}_{t=T+1}^1$   
\end{algorithmic}
\end{algorithm}

\subsection{Direct Reward Backpropagation}\label{sec:reward_prop}

Another standard approach is a differentiable optimization \citep{clark2023directly,prabhudesai2023aligning,uehara2024finetuning}, where gradients are directly propagated from reward functions to update policies. 

The entire algorithm is detailed in \pref{alg:main}. This reward backpropagation entails an iterative process of updating a parameter $\theta$. Each iteration comprises two steps; firstly, samples are generated by executing current policies to approximate the expectation in the loss function, which is directly derived from \eqref{eq:key_plnanning}; second, the current parameter $\theta$ is updated by computing the gradient of the loss function.

\paragraph{Advantages over PPO.} This approach offers further simplicity in implementation in a case where we already have a pre-trained differentiable reward model. Furthermore, the training speed is much faster since we are directly back-propagating from rewards. 

\paragraph{Potential disadvantages over PPO.} Reward backpropagation may face memory inefficiency issues. However, there are strategies to mitigate this challenge. Firstly, implementing gradient accumulation can help to keep a large batch size.
Secondly, as proposed in DRaFT \citep{clark2023directly}, propagating rewards backward from time $0$ to $k$ ($k$ is an intermediate step smaller than $T$) and updating policies from $k$ to $0$ can still yield high performance.   
Thirdly, drawing from insights in the literature on neural SDE/ODE \citep{chen2018neural}, more memory-efficient advanced techniques such as adjoint methods could be helpful. 

Another potential drawback is the requirement for  ``differentiable'' reward functions. Often, reward functions are obtained in a non-differentiable black-box way (e.g., computational feedback derived from physical simulations). In such scenarios, using direct backpropagation necessitates the learning of differentiable reward functions even if accurate reward feedback is available. This learning step can pose challenges as it involves data collection and constructing suitable reward models.  

\begin{algorithm}[!t]
\caption{Reward backpropagation }\label{alg:main}
\begin{algorithmic}[1]
     \STATE {\bf Require}: Pre-trained model $\{\Ncal(\rho(x_{t},t;\theta_{\pre}),\sigma^2(t))\}_{t=T+1}^1$, batch size $m$, parameter $\alpha \in \RR^+$, learning rate $\gamma$
     \STATE \emph{Train a differentiable reward function (if reward feedback is not differentiable)} 
     \STATE {\bf Initialize}: $\theta_1 = \theta_{\pre}$
     \FOR{$s \in [1,\cdots,S]$}
      \STATE Collect $m$ samples $\{x^{(i)}_t(\theta) \}_{t=T}^0$ from a current diffusion model (i.e., generating by sequentially running polices $\{\Ncal(\rho(x_{t},t;\theta),\sigma^2(t))\}_{t=T+1}^1$ from $t=T+1$ to $t=1$) 
      \STATE Update $\theta_s$ to $\theta_{s+1}$: 
      \begin{align}\label{eq:key_reward}
      \theta_{s+1} \leftarrow \theta_s + \gamma \left [ \frac{1}{m} \sum_{i=1}^m \left \{  r(x^{(i)}_0(\theta) )  -  \alpha     \sum_{t=T+1}^1 \frac{\|\rho(x^{(i)}_t(\theta),t;\theta)-\rho(x^{(i)}_t(\theta),t; \theta_{\pre}) \|^2 }{2\sigma^2(t) } \right \} \right ] |_{\theta = \theta_s} . 
      \end{align}  
      \ENDFOR
  \STATE {\bf Output}: Policy $\{p_t(\cdot \mid \cdot; \theta_S) \}_{t=T+1}^1$   
\end{algorithmic}
\end{algorithm}

\section{RL-Based Fine-Tuning Algorithms 2: Distribution-Constrained Approaches}\label{sec:sum_planning_conservative}

{ In this section, following \pref{sec:sum_planning}, we present three additional algorithms (reward-weighted MLE, value-weighted sampling, and path consistency learning) aimed at solving the RL problem of interest defined by Equation~\eqref{eq:key_plnanning}. In the context of standard RL, these algorithms are tailored to align closely with reference policies, specifically pre-trained diffusion models in our context. Formally, indeed, all algorithms in this section are not well-defined when $\alpha = 0$ (i.e., without entropy regularization). Hence, we categorize these three algorithms as distribution-constrained approaches.

The algorithms in this section excel in preserving the characteristics of pre-trained diffusion models. Practically, this property becomes especially crucial when reward functions are learned from training data, and we want to avoid being fooled by distribution samples (a.k.a. overoptimization as detailed in \pref{sec:unknown}). However, as a caveat, this property might also pose challenges in effectively generating high-reward samples beyond the training data. This implies that these approaches may not be suitable when accurate reward feedback is readily available without learning. Hence, we generally recommend using them when reward functions are unknown. } 

\subsection{Reward-Weighted MLE}\label{sec:reward-weighted}

{ Here, we elucidate an approach based on reward-weighted MLE \citep{peters2010relative}, a technique commonly employed in offline RL \citep{peng2019advantage}. While \citet[Algorithm 2]{fan2023dpok} and \citet{zhang2023towards} propose variations of reward-weighted MLE for diffusion models, the specific formulation of reward-weighted MLE discussed here does not seem to have been explicitly detailed previously. 
Therefore, unlike the previous section, we start by outlining the detailed rationale for this approach. Subsequently, we provide a comprehensive explanation of the algorithm. Finally, we delve into its connection with the original training loss of diffusion models.} 

\paragraph{Motivation.} First, from \pref{thm:key2}, recall the form of the optimal policy $p^{\star}_t(x_{t-1}|x_t)$:
\begin{align*}
    p^{\star}_t(x_{t-1}|x_t) :=  \frac{\exp(v_{t-1}(x_{t-1})/\alpha)p^{\pre}_{t}(x_{t-1}|x_{t})}{\exp(v_t(x_t)/\alpha) }.
\end{align*}
Now, we have:
\begin{align*}
   p^{\star}_t =   \argmin_{p_t:\Xcal \to \Delta(\Xcal) } \EE_{x_t \sim u_t }[\KL (p^{\star}_t(\cdot|x_t) \|  p_t(\cdot|x_t))  ] 
\end{align*}
where $u_t \in \Delta(\Xcal)$ is a roll-in distribution encompassing the entire space $\Xcal$
This can be reformulated as value-weighted MLE as follows.

\begin{lemma}[Value-weighted MLE]\label{lem:value}
When $\Pi_t =[\Xcal \to \Delta(\Xcal)] $, the policy $p^{\star}_t$ is equal to   
    \begin{align*}
 p^{\star}_t(\cdot|\cdot) = \argmax_{p_t \in \Pi_t} \EE_{x_{t-1}\sim p^{\pre}_{t-1}(\cdot \mid x_{t}),x_t\sim u_t }\left[\exp\left( \frac{v_{t-1}(x_{t-1})}{\alpha}\right) \log p_t(x_{t-1}|x_t) \right ]. 
\end{align*}
\end{lemma}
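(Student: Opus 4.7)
The plan is to deduce the lemma directly from the closed-form expression for $p^{\star}_t$ already recorded above the statement, namely
$$p^{\star}_t(x_{t-1}|x_t) = \frac{\exp(v_{t-1}(x_{t-1})/\alpha)\, p^{\pre}_t(x_{t-1}|x_t)}{\exp(v_t(x_t)/\alpha)},$$
combined with the elementary fact that a weighted log-likelihood is maximized, pointwise in the conditioning variable, by the normalized weight. No new analytic machinery is needed; the argument is essentially a reweighting of the KL identity stated immediately before the lemma.

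First, I would start from the KL characterization $p^{\star}_t = \argmin_{p_t \in \Pi_t} \EE_{x_t \sim u_t}[\KL(p^{\star}_t(\cdot|x_t) \,\|\, p_t(\cdot|x_t))]$. Since $\Pi_t$ is the full set of conditional distributions and $u_t$ has full support, the argmin is uniquely $p^{\star}_t$ (on the support of $u_t$, which is all of $\Xcal$ by assumption). Expanding the KL and discarding the negative-entropy term $\EE[\log p^{\star}_t(x_{t-1}|x_t)]$ that does not depend on $p_t$, the optimization reduces to
$$p^{\star}_t = \argmax_{p_t \in \Pi_t} \EE_{x_t \sim u_t}\EE_{x_{t-1} \sim p^{\star}_t(\cdot|x_t)}\bigl[\log p_t(x_{t-1}|x_t)\bigr].$$

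Second, I would substitute the explicit form of $p^{\star}_t$ into this inner expectation to obtain
$$\EE_{x_t \sim u_t}\int \frac{\exp(v_{t-1}(x_{t-1})/\alpha)\, p^{\pre}_t(x_{t-1}|x_t)}{\exp(v_t(x_t)/\alpha)} \log p_t(x_{t-1}|x_t)\, dx_{t-1}.$$
The crucial observation is that the normalizer $\exp(v_t(x_t)/\alpha)$ depends only on the conditioning variable $x_t$. Because the optimization over $\Pi_t$ is separable --- for each fixed $x_t$, the conditional $p_t(\cdot|x_t)$ can be chosen independently --- multiplying the integrand by the $x_t$-dependent positive constant $\exp(v_t(x_t)/\alpha)$ does not change the argmax. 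Dropping this factor yields exactly the objective stated in the lemma, after recognizing the remaining $\int p^{\pre}_t(x_{t-1}|x_t)(\cdot)\,dx_{t-1}$ as $\EE_{x_{t-1} \sim p^{\pre}_t(\cdot|x_t)}[\cdot]$.

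The only genuinely careful point is the pointwise-separability step: one needs to observe that since $\Pi_t$ places no coupling constraint across different values of $x_t$, and $u_t$ has full support, the argmax coincides with the family of pointwise maximizers, and positive $x_t$-dependent rescalings are harmless. Once this is noted, the identification is immediate, and (as a sanity check) the pointwise maximizer $p_t(x_{t-1}|x_t) \propto \exp(v_{t-1}(x_{t-1})/\alpha)\, p^{\pre}_t(x_{t-1}|x_t)$ whose normalizer is $\exp(v_t(x_t)/\alpha)$ by the soft Bellman equation \eqref{eq:soft_bellman_diffusion}, recovers $p^{\star}_t$ exactly as required.
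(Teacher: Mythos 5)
Your proof is correct and follows exactly the route the paper intends: it starts from the KL-minimization characterization of $p^{\star}_t$ stated immediately before the lemma, drops the entropy term, substitutes the closed form of $p^{\star}_t$, and discards the $x_t$-dependent normalizer $\exp(v_t(x_t)/\alpha)$ using the pointwise separability of the optimization over $\Pi_t$. The paper leaves this derivation implicit, and your write-up (including the sanity check via the soft Bellman equation) fills it in correctly.
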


This lemma illustrates that if $v_{t-1}$ is known, $p^{\star}_t$ can be estimated using weighted maximum likelihood estimation (MLE). While this formulation is commonly used in standard RL (Peng et al., 2019), in our context of fine-tuning diffusion models, learning a value function is often challenging. Interestingly, this reward-weighted MLE can be performed without directly estimating the soft value function after proper reformulation. To demonstrate this, let's utilize the following lemma:
\begin{lemma}[Characterization of soft optimal value functions]\label{lem:characterization}
    \begin{align*}
   \exp\left(\frac{v_t(x_t)}{\alpha} \right) = \EE_{x_0\sim p^{\pre}_1(x_1),\cdots, x_{t-1}\sim p^{\pre}_{t-1}(x_t) }\left[\exp\left(\frac{r(x_0)}{\alpha} \right )|x_t \right ]. 
\end{align*}
Recall $\EE_{\{p^{\pre}_t\}}[\cdot|x_t]$ means $\EE_{x_0\sim p^{\pre}_1(x_1),\cdots, x_{t-1}\sim p^{\pre}_{t-1}(x_t) }[\cdot|x_t]$. 
\end{lemma}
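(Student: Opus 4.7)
The plan is to prove the identity by backward induction on $t$, starting from $t=0$ and propagating up to arbitrary $t$, using the soft Bellman equation \eqref{eq:soft_bellman_diffusion} as the engine of the recursion. The key observation is that the recursion in \eqref{eq:soft_bellman_diffusion} expresses $\exp(v_t(x_t)/\alpha)$ as a one-step conditional expectation of $\exp(v_{t-1}(x_{t-1})/\alpha)$ against the pre-trained kernel $p^{\pre}_t(\cdot\mid x_t)$; iterating this yields a $t$-step conditional expectation of $\exp(r(x_0)/\alpha)$ under the pre-trained dynamics, which is exactly the right-hand side of the claim.

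Concretely, I would first handle the base case $t=0$: by definition $v_0(x_0) = r(x_0)$, so $\exp(v_0(x_0)/\alpha) = \exp(r(x_0)/\alpha)$, and the right-hand side collapses to this same quantity (the conditional expectation is trivial since we are already conditioning on $x_0$). For the inductive step, assume the identity holds at time $t-1$, i.e.,
\begin{align*}
  \exp\!\left(\frac{v_{t-1}(x_{t-1})}{\alpha}\right) = \EE_{\{p^{\pre}_k\}}\!\left[\exp\!\left(\frac{r(x_0)}{\alpha}\right)\,\Big|\, x_{t-1}\right].
\end{align*}
Plugging this into the soft Bellman equation \eqref{eq:soft_bellman_diffusion} gives
\begin{align*}
  \exp\!\left(\frac{v_t(x_t)}{\alpha}\right) = \int \EE_{\{p^{\pre}_k\}}\!\left[\exp\!\left(\frac{r(x_0)}{\alpha}\right)\,\Big|\, x_{t-1}\right] p^{\pre}_t(x_{t-1}\mid x_t)\,\mathrm{d}x_{t-1},
\end{align*}
and then the tower property of conditional expectation, together with the Markov structure of the pre-trained reverse process (where $x_0,\dots,x_{t-1}$ are generated sequentially from $p^{\pre}_{t-1},\dots,p^{\pre}_1$), collapses this nested integral into the single conditional expectation $\EE_{\{p^{\pre}_k\}}[\exp(r(x_0)/\alpha)\mid x_t]$, completing the induction.

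The main thing to be careful about, rather than a genuine obstacle, is the time-indexing convention used throughout the tutorial: the diffusion process runs from $t=T$ down to $t=0$, so the kernel $p^{\pre}_t(\cdot\mid x_t)$ produces $x_{t-1}$ from $x_t$, and the tower step must unroll the expectation in this reverse direction. Once this bookkeeping is set up, the argument is a clean two-line induction with no analytic difficulty, since the soft Bellman equation already encodes precisely the one-step recursion we need and the exponentiation linearizes the log-sum-exp structure into an ordinary conditional expectation.
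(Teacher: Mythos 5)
Your proof is correct and follows essentially the same route as the paper: the paper's own argument is precisely a recursive unrolling of the soft Bellman equation \eqref{eq:soft_bellman_diffusion} down to the base case $v_0 = r$, which it compresses into a single chain of equalities with ``$=\cdots=$''. Your version simply makes the induction and the tower-property step explicit.
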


\begin{proof}
This is obtained by recursively using the soft-Bellman equation \eqref{eq:soft_bellman_diffusion}: 
\begin{align}
 \exp\left (\frac{ v_{t}(x_{t})}{\alpha}\right )& = \int \exp\left (\frac{ v_{t-1}(x_{t-1})}{\alpha}\right)p^{\pre}_{t}(x_{t-1} \mid x_{t}) d x_{t-1}=\cdots = \EE_{ \{p^{\pre}_t\} }\left[\exp\left(\frac{r(x_0)}{\alpha} \right )|x_t \right ]. 
\end{align}
\end{proof}

\paragraph{Algorithm.}

\begin{algorithm}[!t]
\caption{Reward-weighed MLE}\label{alg:weightedMLE}
\begin{algorithmic}[1]
     \STATE {\bf Require}: Pre-trained model $\{\Ncal(\rho(x_{t},t;\theta_{\pre}),\sigma^2(t))\}_{t=T+1}^1$,  batch size $m$, parameter $\alpha \in \RR^+$, learning rate $\gamma$
      \STATE {\bf Initialize}: $\theta_1 = \theta_{\pre}$
     \FOR{$s \in [1,\cdots,S]$}
    \FOR{$k \in [T+1,\cdots,1]$}
      \STATE Collect $m$ samples $\{x^{(i,t)}_k\}_{i=1,k=T}^{m,0}$ from \\  a policy $p_{T+1}(\cdot \mid \cdot; \theta_s)  ,\cdots,p_{t+1}(\cdot| \cdot; \theta_s),p^{\pre}_{t}(\cdot |\cdot),\cdots,p^{\pre}_{1}(\cdot | \cdot) $.  
     \ENDFOR
      \STATE Update $\theta_s$ to $\theta_{s+1}$ as follows: 
      \begin{align}\label{eq:key}
\theta_{s+1} \leftarrow \theta_s - \gamma \nabla_{\theta} \sum_{t=T+1}^1 \sum_{i=1}^m  \left[\exp \left (\frac{r(x^{(i,t)}_0)}{\alpha} \right) \frac{\|x^{(i,t)}_{t-1} -\rho(x^{(i,t)}_t,t;\theta) \|^2_2}{\{\sigma^{\diamond}_t\}^2 }   \right]|_{\theta=\theta_s} . 
      \end{align} 
        \ENDFOR
  \STATE {\bf Output}: Policy $\{p_t(\cdot \mid \cdot; \theta_S) \}_{t=T+1}^1$   
\end{algorithmic}
\end{algorithm}

Now, we are ready to present the algorithm. By combining \pref{lem:value} and \pref{lem:characterization}, we obtain the following.

\begin{lemma}[Reward-weighted MLE]\label{lem:reward-weighted}
When $\Pi_t =[\Xcal \to \Delta(\Xcal)] $, 
\begin{align}\label{eq:monte_carlo}
 p^{\star}_t = \argmax_{p_t \in \Pi_t} \EE_{x_0 \sim p^{\pre}_1(x_1),\cdots, x_{t-1}\sim p^{\pre}_{t-1}(x_{t}),x_t\sim u_t }\left[\exp\left(\frac{r(x_0)}{\alpha}\right) \log p_t(x_{t-1}|x_t) \right ], 
\end{align}
\end{lemma}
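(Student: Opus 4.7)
The plan is to derive the stated reward-weighted MLE characterization by composing the two preceding lemmas: the value-weighted MLE expression for $p^{\star}_t$ and the characterization of $\exp(v_{t-1}/\alpha)$ as a conditional expectation of $\exp(r(x_0)/\alpha)$ under the pre-trained denoising chain. The overall structure is just a substitution followed by the tower property of conditional expectation, so I expect the argument to be short.

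\textbf{Step 1: Start from value-weighted MLE.} By Lemma \ref{lem:value}, for any roll-in distribution $u_t \in \Delta(\Xcal)$,
\begin{align*}
 p^{\star}_t = \argmax_{p_t \in \Pi_t} \EE_{x_{t-1}\sim p^{\pre}_{t-1}(\cdot|x_t),\, x_t\sim u_t}\left[\exp\left(\frac{v_{t-1}(x_{t-1})}{\alpha}\right) \log p_t(x_{t-1}\mid x_t)\right].
\end{align*}
So it suffices to rewrite the weight $\exp(v_{t-1}(x_{t-1})/\alpha)$ purely in terms of $r$ and the pre-trained chain.

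\textbf{Step 2: Substitute the characterization of $v_{t-1}$.} By Lemma \ref{lem:characterization}, applied at time $t-1$,
\begin{align*}
\exp\left(\frac{v_{t-1}(x_{t-1})}{\alpha}\right) = \EE_{x_0\sim p^{\pre}_1(\cdot|x_1),\,\ldots,\, x_{t-2}\sim p^{\pre}_{t-2}(\cdot|x_{t-1})}\left[\exp\left(\frac{r(x_0)}{\alpha}\right)\,\Big|\, x_{t-1}\right].
\end{align*}
Plugging this into the expression from Step 1 and using that $\log p_t(x_{t-1}\mid x_t)$ is measurable with respect to $(x_{t-1},x_t)$, the tower property lets me pull $\log p_t(x_{t-1}\mid x_t)$ inside the inner conditional expectation.

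\textbf{Step 3: Collapse the expectations.} The joint distribution over $(x_0,\ldots,x_{t-2},x_{t-1},x_t)$ obtained by drawing $x_t \sim u_t$, then $x_{t-1}\sim p^{\pre}_{t-1}(\cdot|x_t)$, and then $x_{t-2},\ldots,x_0$ backward along the pre-trained chain is exactly the one appearing on the right-hand side of the target expression. Hence, combining Steps 1 and 2,
\begin{align*}
 p^{\star}_t = \argmax_{p_t \in \Pi_t} \EE_{x_0 \sim p^{\pre}_1(\cdot|x_1),\,\ldots,\, x_{t-1}\sim p^{\pre}_{t-1}(\cdot|x_t),\, x_t\sim u_t}\left[\exp\left(\frac{r(x_0)}{\alpha}\right) \log p_t(x_{t-1}\mid x_t)\right],
\end{align*}
which is the claim.

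\textbf{Anticipated difficulty.} There is no genuine technical obstacle: the result is essentially a bookkeeping exercise in rewriting a conditional expectation via the tower property. The only point worth being careful about is the direction of indexing in the pre-trained chain — the indices $p^{\pre}_{t-1},\ldots,p^{\pre}_1$ correspond to the denoising direction from $x_{t-1}$ down to $x_0$, matching how Lemma \ref{lem:characterization} is stated. Once that is set up consistently, the identification of the joint law in Step 3 is immediate and the argmax carries through because the optimization is with respect to $p_t$, which does not affect the sampling distribution.
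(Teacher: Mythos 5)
Your proposal is correct and follows essentially the same route as the paper's proof: both compose Lemma~\ref{lem:value} with Lemma~\ref{lem:characterization}, using the tower property to collapse the weight $\exp(v_{t-1}(x_{t-1})/\alpha)$ into an expectation of $\exp(r(x_0)/\alpha)$ over the pre-trained denoising chain. The only difference is ordering — you invoke the value-weighted MLE first and substitute second, while the paper rewrites the objective first and cites Lemma~\ref{lem:value} at the end — which is immaterial.
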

\begin{proof}
Using  \pref{lem:characterization}, we have 
\begin{align*}
    & \EE_{x_{t-1}\sim p^{\pre}_{t-1}(\cdot \mid x_{t}),x_t\sim u_t }\left[\exp\left( \frac{v_{t-1}(x_{t-1})}{\alpha}\right) \log p_t(x_{t-1}|x_t) \right ] \\
    & = \EE_{x_{t-1}\sim p^{\pre}_{t-1}(\cdot \mid x_{t}),x_t\sim u_t }\left[ \EE_{ \{p^{\pre}_t\} }\left[\exp\left(\frac{r(x_0)}{\alpha} \right )|x_{t-1} \right ] \log p_t(x_{t-1}|x_t) \right ] \\
    & = \EE_{x_0 \sim p^{\pre}_1(x_1),\cdots, x_{t-1}\sim p^{\pre}_{t-1}(x_{t}),x_t\sim u_t }\left[\exp\left(\frac{r(x_0)}{\alpha}\right) \log p_t(x_{t-1}|x_t) \right ]. 
\end{align*}
The rest of the proof is obvious by using \pref{lem:value}. 
\end{proof}

Then, after approximating the expectation in \eqref{eq:monte_carlo}, by using a Gaussian policy class with the mean parameterized by neural networks as a policy class $\Pi_t$, we can estimate $p^{\star}_t$. Finally, the entire algorithm is described in Algorithm~\ref{alg:weightedMLE}.  

{Here, we give two remarks. This is an off-policy algorithm. Hence, we can use any roll-in policies as $u_t$ in \pref{lem:reward-weighted}. In Algorithm~\ref{alg:weightedMLE}, we use the current policy as a roll-in policy. Additionally, in Algorithm~\ref{alg:weightedMLE}, the loss function \eqref{eq:key} is derived by recalling that, up to constant, $ -\log p_t(x_{t-1}|x_t)$ is equal to 
$\frac{\|x_{t-1} -\rho(x_t,t;\theta) \|^2_2}{\{\sigma^{\diamond}_t\}^2}. $
} 

{ \paragraph{Advantages and potential disadvantages.} Like PPO in \pref{sec:PPO}, this approach is expected to be memory efficient, and does not require learning differentiable reward functions, which can often be challenging. However, compared to direct reward backpropagation Section~\ref{sec:reward_prop}, it might not be as computationally efficient. Furthermore, unlike PPO, this algorithm is not effective when $\alpha=0$, potentially limiting its ability to generate samples with extremely high rewards.}   

{\subsubsection{Relation with Loss Functions for the Original Training Objective}

In this subsection, we explore the connection with the original loss function of pre-trained diffusion models. To see that, as we see in \pref{sec:training}, recall  
\begin{align*}
    x^{(i,t)}_{t-1}= \underbrace{x_t^{(i,t)} + [0.5x_t - 1/\sigma^{\diamond}_t \epsilon_{\bar \theta_{\pre}}(x_t,T-t) ](\delta t)}_{\rho(x_t^{(i,t)},t;\theta_{\pre}) }+\epsilon^{(i,t)}_t \sigma^{\diamond}_t ,\,\,\epsilon^{(i,t)}\sim \Ncal(0,\mathbb{I}).  
\end{align*}
Then, the loss function \eqref{eq:key} in reward-weighted MLE reduces to 
 \begin{align}
 \theta_s - \gamma \nabla_{\theta} \sum_{t=T+1}^1 \sum_{i=1}^m  \left[\exp \left (\frac{r(x^{(i,t)}_0)}{\alpha} \right) \left \|\epsilon^{(i,t)}_t -\frac{\epsilon(x^{(i,t)}_t,T-t;\bar \theta_{\pre})-\epsilon(x^{(i,t)}_t,T-t;\theta) }{ \{\sigma^{\diamond}_t\}^2 } \right \|^2_2   \right]|_{\theta=\theta_s} . 
\end{align} 
This objective function closely resembles the reward-weighted version of the loss function \eqref{eq:loss_score} used for training pre-trained diffusion models.
}

\subsection{Value-Weighted Sampling}\label{sec:value}

{ Thus far, we have discussed methods for fine-tuning pre-trained diffusion models. Now, let's delve into an alternative approach during inference that aims to sample from the target distribution $p_r$ without explicitly fine-tuning the diffusion models. In essence, this approach involves incorporating gradients of value functions during inference alongside the denoising process in pre-trained diffusion models. Hence, we refer to this approach as value-weighted sampling. While it seems that this method has not been explicitly formalized in previous literature, the value-weighted sampling closely connects with classifier guidance, as discussed in \pref{sec:classfier-based}.   } 

{ Before delving into the algorithmic details, we outline the motivation. Subsequently, we present the concrete algorithm and discuss its advantages—specifically, its capability to operate without the necessity of fine-tuning diffusion models—and its disadvantage, which involves the need to obtain differentiable value functions.
}

\paragraph{Motivation.} Considering a Gaussian policy $x_{t-1}\sim \mathcal{N}(\tilde{\rho}(x_t,t),\sigma^2(t))$, we aim to determine $\tilde{\rho}(x_t,t;\theta)$ such that $\mathcal{N}(\tilde{\rho}(x_t,t;\theta),\sigma^2(t))$ closely approximates $p^{\star}_t(\cdot|x_t)$. Here, typically, we have $\rho(x_t,t;\theta) = x_t + (\delta t)\bar g(x_{t},t)$ and $\sigma^2(t) = \tilde g(t)(\delta t)$ for certain function $\bar g:\Xcal \times [0,T]\to \RR^d$ and $\tilde g:[0,T] \to \RR$, as we have explained in \pref{sec:training}. Then, using $\propto$ as equality up to the normalizing constant,  
\begin{align*}
     p^{\star}_t(x_{t-1}\mid x_t) &\propto  \exp(v_{t-1}(x_{t-1})/\alpha)\exp\left (-\frac{\|x_{t-1}-x_t- (\delta t)\bar g(x_{t},t) \|^2 }{0.5 \tilde g(t)(\delta t)  } \right) \\ 
    & \approx \exp(v_t(x_{t}) + \nabla v_{t}(x_{t})/\alpha \cdot  \{x_{t-1}-x_{t}\}   )\exp\left (-\frac{\|x_{t-1}-x_t-  (\delta t)\bar g(x_{t},t)\|^2 }{0.5 \tilde g(t)(\delta t) } \right) \\
    &\approx  \exp\left (-\frac{ \| x_{t-1}-x_t- (\delta t) \tilde g(t) \nabla v_t(x_{t})/\alpha - (\delta t)\bar g(x_{t},t) )\| ^2}{0.5 \tilde g(t)(\delta t) } \right)  
\end{align*}
Hence, we can approximate:
\begin{align}\label{eq:sampling}
   \tilde \rho(x_t, t ) \approx \frac{\sigma^2(t) \nabla v_{t}(x_{t})}{\alpha} + \rho(x_t, t; \theta_{\pre}). 
\end{align}

\begin{remark}
Note while the above derivation is heuristic, it is formalized in \citet[Lemma 2]{uehara2024finetuning} in the continuous-time formulation.
\end{remark}

\begin{algorithm}[!t]
\caption{Value-weighted sampling}\label{alg:value_sampling}
\begin{algorithmic}[1]

 \STATE {\bf Require}:  Pre-trained model $\{p^{\pre}_{t}(x_{t-1}|x_t)\}_t = \{ \Ncal(\rho(x_t,t; \theta_{\pre}),\sigma^2(t))\}_t $. 
        \STATE Estimate $v:\Xcal \times [0,T] \to \RR$  and denote it by $\hat v(\cdot,\cdot)$
        \begin{itemize}
            \item (1) Monte-Carlo approach in \eqref{eq:estimate_value}
            \item (2) Value iteration approach (Soft Q-learning) in \eqref{eq:estimate_value2} in \pref{sec:soft-q-learning}
            \item (3) Approximation using Tweedie's formula in \pref{sec:tweedie}
        \end{itemize}
    \FOR{$t \in [T+1,\cdots,1]$}
  
      \STATE Set 
     \begin{align*}
        \tilde \rho(x_t,t ):= \frac{\sigma^2(t) \nabla_x \hat v(x,t)|_{x=x_t}}{\alpha}  + \rho(x_t,t; \theta_{\pre}). 
\end{align*}
\ENDFOR 
  \STATE {\bf Output}: $\{\Ncal(\tilde \rho(x_t,t),\sigma^2(t))\}_{t=T+1}^1$
\end{algorithmic}
\end{algorithm}

\paragraph{Algorithm.} Utilizing \eqref{eq:sampling}, the entire algorithm of value-weighted sampling is outlined in Algorithm~\ref{alg:value_sampling}. This method doesn't involve updating parameters in a diffusion model; hence, it's not a fine-tuning method. 

Note that in this algorithm, we require a form of $v_t(\cdot)$ to compute gradients. This value function can be estimated through regression using the characterization described in Lemma~\ref{lem:characterization}. Here, inspired by \pref{lem:characterization}, we use the following loss function: 
\begin{align}\label{eq:estimate_value}
   \hat v_t(x) = \argmin_{h: [\RR^d, [0,T]] \to \RR } \sum_{t= T+1}^1 \sum_{i=1}^m \left[\left \{ \exp\left(\frac{h_t(x^{(i,t)}_t)}{\alpha} \right) - \exp\left(\frac{r(x^{(i,t)}_0)}{\alpha} \right ) \right \}^2  \right ], 
\end{align}
where the data is collected in an off-policy way. {Later, we explain two alternative approaches in \pref{sec:soft-q-learning} and \pref{sec:tweedie}. }

\paragraph{Advantages and potential disadvantages.} {The value-weighted sampling is straightforward and less memory-intensive since it avoids fine-tuning.} Therefore, it presents an appealing simple option if it performs well. Indeed, in various inverse problems, such as inpainting, super-resolution, and colorization, setting $r(x)$ as a likelihood of the measurement model $y=g(x)+\epsilon$ (where $y$ represents actual measurements, $\epsilon$ denotes noise, and $g$ defines the measurement function) has proven highly successful \citep{chung2022improving,bansal2023universal,chung2022diffusion}. 

{ The potential drawback compared to fine-tuning algorithms so far (i.e., PPO and reward-weighted MLE) is the necessity to learn differetiable soft value functions like direct reward backpropagation. This learning process is often not straightforward as explained in \pref{sec:reward_prop}. The previously discussed fine-tuning algorithms (i.e., PPO, reward-weighted MLE) circumvent this requirement by obtaining rewards to go via Monte Carlo approaches, thereby avoiding the need for soft value functions.
}

\subsubsection{Soft Q-learning}\label{sec:soft-q-learning}

{ We have elucidated that leveraging Lemma~\ref{lem:characterization}, we can estimate soft value functions $v_t(\cdot)$ based on Equation~\eqref{eq:estimate_value} in a Monte Carlo way. Subsequently, these soft value functions are used in value-weighted sampling to sample from the target distribution $p_r$. Alternatively, there is another method that involves using soft Bellman equations to estimate soft value functions $v_t(\cdot)$. This technique is commonly called soft Q-learning in the context of standard RL. }

First, recalling the soft Bellman equations in  \eqref{eq:soft_bellman}, we have 
\begin{align*} 
\exp \left  (\frac{ v_{t}(x_{t})}{\alpha} \right )=  \int \exp\left (\frac{ v_{t-1}(x_{t-1})}{\alpha}\right)p^{\pre}_{t}(x_{t-1} \mid x_{t}) d x_{t-1}. 
\end{align*}
Taking the logarithm, we obtain 
\begin{align*} 
 v_{t}(x_{t}) = \alpha \log \int \exp\left (\frac{ v_{t-1}(x_{t-1})}{\alpha}\right)p^{\pre}_{t}(x_{t-1} \mid x_{t}) d x_{t-1}. 
\end{align*}
Hence,  
\begin{align*}
     v_t = \argmin_{h:\Xcal \to \RR}\EE_{x_t\sim u_t} \left[\left \{\frac{h(x_t)}{\alpha} -\log \int \exp\left( \frac{v_{t-1}(x_{t-1})}{\alpha} \right)p_{\pre}(x_{t-1}|x_t) d x_{t-1}  \right \}^2   \right ]
\end{align*}
where $u_t \in \Delta(\mathcal{X})$ is any roll-in distribution that covers the entire space $\mathcal{X}$. Using this relation and replacing the expectation $\EE_{x_t\sim u_t}$ with empirical approximation, we are able to estimate soft value functions $v_t$ in a recursive manner: 
\begin{align}\label{eq:estimate_value2}
  \{\hat v^{\langle j \rangle}_t(x)\}_t \leftarrow \argmin_{h: [\RR^d, [0,T]] \to \RR } \sum_{t= T+1}^1 \sum_{i=1}^m \left[ \left\{ h_t(x^{(i,t)}_t)  - \log \int \exp\left( \frac{\hat v^{\langle j-1 \rangle}_{t-1}(x_{t-1})}{\alpha} \right)p_{\pre}(x_{t-1}|x^{(i,t)}_t) d x_{t-1}  \right \}^2   \right], 
\end{align}
where the data is collected in an off-policy way.

\begin{remark}
Although soft Q-learning is widely used in standard RL \citep{schulman2017equivalence}, it cannot be directly applied to our fine-tuning context without resorting to value-weighted sampling or value-weighted MLE. This is because, even if we estimate soft-value functions as $\hat v_t$, substituting $v_t$ with $\hat v_t$ in the soft-optimal policy results in an unnormalized policy.  
\end{remark}

\subsubsection{Approximation using Tweedie's formula }\label{sec:tweedie}

So far, we have explained two approaches: a Monte Carlo approach and a value iteration approach to estimate soft value functions. However, learning value functions in \eqref{eq:estimate_value} can still be often challenging in practice. Therefore, we can employ approximation strategies inspired by recent literature on classifier guidance (e.g., reconstruction guidance \citep{ho2022video}, manifold constrained gradients \citep{chung2022improving}, universal guidance \citep{bansal2023universal}, and diffusion posterior sampling \citep{chung2022diffusion}). 

Specifically, we adopt the following approximation:
\begin{align}
     v_t(x_t) &=\alpha \log \EE_{ \{p^{\pre}_t\} }\left[\exp\left(\frac{r(x_0)}{\alpha} \right)  |x_t \right]= \alpha \log \left( \int  \exp\left(\frac{r(x_0)}{\alpha} \right)  p^{\pre}(x_0|x_t)dx_0  \right)  \label{eq:integral} \\ 
     & \approx \alpha \log \left(\exp\left(\frac{r( \hat x_0(x_t) )}{\alpha} \right)  \right), \quad \hat x_0(x_t)=\EE_{ \{p^{\pre}_t\} }[x_0\mid x_t]  \label{eq:integral2},  \\ 
     &= r(\hat x_0(x_t)).  \nonumber
\end{align}
Here, we replace the integration in \eqref{eq:integral} with a Dirac delta distribution with the posterior mean. Importantly, we can calculate $\hat x_0(x_t)=\EE_{ {p^{\pre}_t} }[x_0\mid x_t]$ using the pre-trained (score-based) diffusion model based on Tweedie's formula:
\begin{align*}
    \EE_{ {p^{\pre}_t} }[x_0\mid x_t]=\frac{x_t + \{\sigma^{\diamond}_t\}^2 \nabla  \log q_t(x_t) }{\mu^{\diamond}_t }. 
\end{align*}
Recall that the notation $\mu^{\diamond}_t, \sigma^{\diamond}_t,q_t$ are defined in \pref{eq:conditional}.  Finally, { by recalling $\nabla  \log q_t \approx s_{\hat \theta_{\pre}}(x_t,t)$ in score-based diffusion models}, we can approximate $\nabla v_t(x)$ with 
\begin{align*}
    \nabla_{x_t} r\left( \frac{x_t + \{\sigma^{\diamond}_t\}^2 \nabla  s_{\hat \theta_{\pre}}(x_t,t) }{\mu^{\diamond}_t } \right), 
\end{align*}
and plug it into \pref{alg:value_sampling} (i.e., value-weighted sampling). 

{As mentioned earlier, this approach has been practically widely used in the context of classifier guidance. Despite its simplicity, the approximation error can be significant, as it can not be diminished even with a large sample size because the discrepancy from \eqref{eq:integral} to \eqref{eq:integral2} is not merely a statistical error that diminishes with increasing sample size. }

\subsubsection{Zeroth-Order Guidance using Path Integral Control }\label{sec:path_integral}

In the previous subsection (\pref{sec:tweedie}), we discussed an approximation technique to bypass the necessity of learning explicit value functions. Another approach to bypass this requirement is to use control-based techniques for obtaining soft-optimal policies, a.k.a., path integral control \citep{theodorou2010generalized,williams2017model,kazim2024recent}. 

Recall that we have $\rho(x_t,t;\theta) = x_t + (\delta t)\bar g(x_{t},t)$ and $\sigma^2(t) = \tilde g(t)(\delta t)$, the motivation behind this approach is as follows. Initially, we have:
\begin{align*}
    &\quad \nabla_{x_t} \exp(v(x_t)/\alpha)\\
    &= \EE_{\{p^{\pre}_t\} }[\exp(v_{t-1}(x_{t-1})/\alpha)\mid x_t ] \tag{Soft Bellman equation} \\ 
    &=\nabla_{x_t} \left\{ \int \exp(v_{t-1}(x_{t-1}/\alpha))\exp\left (-\frac{(x_{t-1}-x_t- (\delta t)\bar g(x_{t},t) )^2 }{0.5 \tilde g(t)(\delta t)  } \right)d x_{t-1} \right\} \\
    &=\EE_{\{p^{\pre}_t\} }\left[\exp(v_{t-1}(x_{t-1}/\alpha))\{x_{t-1}-x_t - (\delta t)\bar g(x_t,t) \}\frac{\{ 1 + (\delta t)\nabla_{x_t} \bar g(x_t,t)\} }{ \tilde g(t)(\delta t)}    |x_t \right ]\\
    &\approx \frac{1}{\tilde g(t)(\delta t) }\EE_{\{p^{\pre}_t\} }\left[\exp(v_{t-1}(x_{t-1}/\alpha))\{x_{t-1}-x_t - (\delta t)\bar g(x_t,t) \}   |x_t \right ]\\
    &=\frac{1}{\tilde g(t)(\delta t) }\EE_{\{p^{\pre}_t\} }\left[\EE_{ \{p^{\pre}_t\}}\left[\exp(r(x_0)/\alpha) \mid x_{t-1} \right]\epsilon_t |x_t \right ]=  \frac{1}{\tilde g(t)(\delta t) }\EE_{\{p^{\pre}_t\} }\left[\exp(r(x_{0}/\alpha))\epsilon_t |x_t \right ]. 
\end{align*} 
Now, we obtain: 
\begin{align}\label{eq:path_integral}
    \frac{\sigma^2(t)\nabla_{x_t} v(x_t)}{\alpha} \approx  \frac{1}{\alpha } \times \frac{\EE_{\{p^{\pre}_t\} }\left[\exp(r(x_{0})/\alpha))\epsilon_t |x_t \right ] }{\EE_{\{p^{\pre}_t\} }\left[\exp(r(x_{0})/\alpha)) |x_t \right ]}. 
\end{align}
Importantly, this approach does not require any differentiation. Therefore, by running policies from pre-trained models and simply using Monte Carlo approximation in both the denominator and numerator, we can estimate the above quantity (the right-hand side of \eqref{eq:path_integral}) without making any models (classifiers), unlike classifier guidance. Note while this approach is widely used in the control community, it may not be feasible for diffusion models due to the high dimensionality of the input space.

\subsection{Path Consistency Learning (Losses Often Used in Gflownets)}\label{sec:gflownet}

\begin{algorithm}[!t]
\caption{Path Consistency Learning (Training with detailed balance loss)}\label{alg:detailed_balance_loss}
\begin{algorithmic}[1]
 \STATE {\bf Require}: Diffusion-model $  \{\Ncal(\rho(x_{t},t ;\theta),\sigma^2(t)) \}_{t=T+1}^1$, pre-trained model $\{\Ncal(\rho(x_{t},t;\theta_{\pre}),\sigma^2(t))\}_{t=T+1}^1$, batch size $m$, parameter $\alpha \in \RR^+$, learning rate $\gamma$
  \STATE Set a model $\{v_t(\cdot;\theta)\}$ to learn optimal soft value function, and a model $\{ p_t(\cdot|\cdot ;\theta)\} $ to learn optimal polices. 
   \STATE {\bf Initialize}: $\theta_1 = \theta_{\pre}$
      \FOR{$s=\{1,\cdots,S\}$}
      \STATE Collect $m$ samples $\{x^{(i)}_t(\theta) \}_{t=T+1}^0$ from a current diffusion model (i.e., generating by sequentially running polices $\{\Ncal(\rho(x_{t},t;\theta),\sigma^2(t))\}_{t=T+1}^1$ from $t=T$ to $t=0$) 
      \STATE Set $v_0 = r $ 
      {\small 
     \begin{align*}
    \phi_{s+1}  & \leftarrow \phi_s -\gamma \nabla_{\phi} \sum_{t=T+1}^1\sum_{i=1}^m \left \{\frac{v_t(x^{(i)}_t;\phi)}{\alpha} + \log p_t(x^{(i)}_{t-1}|x^{(i)}_t;\theta_s) -  \frac{v_{t-1}(x^{(i)}_{t-1};\phi_s)}{\alpha} -  \log p^{\pre}_t(x^{(i)}_{t-1}|x^{(i)}_t)  \right \}^2    
  |_{\phi_s} , \\ 
     \theta_{s+1}  & \leftarrow \theta_s -\gamma \nabla_{\theta} \sum_{t=T+1}^1\sum_{i=1}^m  \left \{\frac{v_t(x^{(i)}_t;\phi_s)}{\alpha} +  \log p_t(x^{(i)}_{t-1}|x^{(i)}_t;\theta) - \frac{v_{t-1}(x^{(i)}_{t-1};\phi_s)}{\alpha}  -  \log p^{\pre}_t(x^{(i)}_{t-1}|x^{(i)}_t)  \right \}^2    
|_{\theta_s}, 
     \end{align*}
     } 
 
   \ENDFOR 
  \STATE {\bf Output}: $\{p_t(x_{t-1}|x_t;\theta_S) \}_t$
\end{algorithmic}
\end{algorithm}

Now, we explain how to apply path consistency learning (PCL) \citep{nachum2017bridging} to fine-tune diffusion models. In the Gflownets literature \citep{bengio2023gflownet}, it seems that this variant is utilized as either a detailed balance or a trajectory balance loss, as discussed in \citet{mohammadpour2023maximum,tiapkin2023generative,deleu2024discrete}. However, to the best of our knowledge, the precise formulation of path consistency learning in the context of fine-tuning diffusion models has not been established. { Therefore, we start by elucidating the rationale of PCL. Subsequently, we provide a comprehensive explanation of the PCL. Finally, we discuss its connection with the literature on Gflownets. } 

\paragraph{Motivation.} Here, we present the fundamental principles of the PCL.
To start with, we prove the following lemma, which characterizes soft-value functions and soft-optimal policies recursively.  

\begin{lemma}[1-step Consistency Equation] 
\begin{align}\label{eq:detailed}
    \left(\frac{ v_{t}(x_{t})}{\alpha}\right) + \log p^{\star}_t(x_{t-1}|x_{t}) =  \left( \frac{ v_{t-1}(x_{t-1})}{\alpha}\right) + \log  p^{\pre}_t(x_{t-1}|x_{t})
\end{align}    
\end{lemma}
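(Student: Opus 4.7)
The plan is to derive the consistency equation directly from the closed-form expression for the soft optimal policy that was already established in the excerpt, combined with the soft Bellman equation. Specifically, after substituting the soft Bellman recursion into the formula $p^{\star}_t(\cdot|x_t) \propto \exp(v_{t-1}(\cdot)/\alpha) p^{\pre}_t(\cdot|x_t)$, the normalizing constant becomes exactly $\exp(v_t(x_t)/\alpha)$, so rearranging will give the desired identity with essentially no extra work.

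Concretely, first I would recall the identity
\begin{align*}
      p^{\star}_{t}(x_{t-1} |x_t) = \frac{\exp( v_{t-1}(x_{t-1})/\alpha)\, p^{\pre}_t(x_{t-1} \mid x_t) }{ \exp( v_{t}(x_t)/\alpha) },
\end{align*}
which was derived earlier by plugging the soft Bellman equation \eqref{eq:soft_bellman_diffusion} into \eqref{eq:diffusion_soft}. Then I would take the logarithm of both sides to obtain
\begin{align*}
   \log p^{\star}_t(x_{t-1}|x_t) = \frac{v_{t-1}(x_{t-1})}{\alpha} + \log p^{\pre}_t(x_{t-1}|x_t) - \frac{v_t(x_t)}{\alpha}.
\end{align*}
Finally, moving the $v_t(x_t)/\alpha$ term to the left-hand side yields exactly \eqref{eq:detailed}.

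There is essentially no obstacle here: the lemma is a direct algebraic rearrangement of facts already established, so the only subtle point to flag is why the normalizing constant in $p^{\star}_t$ equals $\exp(v_t(x_t)/\alpha)$ rather than some generic quantity. That identification is precisely the content of the soft Bellman equation \eqref{eq:soft_bellman_diffusion}, which says $\exp(v_t(x_t)/\alpha) = \int \exp(v_{t-1}(x_{t-1})/\alpha) p^{\pre}_t(x_{t-1}|x_t)\,dx_{t-1}$, so I would briefly remind the reader of this step when presenting the derivation to make the logic transparent.
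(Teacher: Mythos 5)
Your proof is correct, but it takes a genuinely different route from the paper's. You derive the identity in one line by taking the logarithm of the closed-form soft optimal policy $p^{\star}_{t}(x_{t-1}|x_t)= \exp(v_{t-1}(x_{t-1})/\alpha)\, p^{\pre}_t(x_{t-1}\mid x_t)\,/\exp(v_{t}(x_t)/\alpha)$, where the identification of the normalizer with $\exp(v_t(x_t)/\alpha)$ is exactly the soft Bellman equation \eqref{eq:soft_bellman_diffusion} --- and you correctly flag that this identification is the only nontrivial step. The paper instead argues via a detailed-balance-style factorization of the joint law $l(x_t,x_{t-1})$ induced by the soft-optimal policies: it writes $l(x_t)l(x_{t-1}\mid x_t)=l(x_{t-1})l(x_t\mid x_{t-1})$, substitutes the marginal characterization from Theorem~\ref{thm:key2} and the posterior-preservation result from Theorem~\ref{thm:key3}, cancels the common factors $p^{\pre}_t(x_t)$ and the normalizing constant $C$, and takes logarithms. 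Your argument is shorter and relies on less machinery (it does not need Theorems~\ref{thm:key2} or~\ref{thm:key3} at all), which is a genuine advantage in terms of economy. What the paper's route buys is conceptual: it exhibits the lemma as a detailed-balance relation between the fine-tuned forward process and the pre-trained backward process, which is precisely the structure exploited in the Gflownets literature (the ``detailed balance loss'') that this section is building toward, and it generalizes mechanically to the $k$-step and $T$-step consistency equations by the same telescoping of the joint law.
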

\begin{proof}
We consider the marginal distribution with respect to $x_t$ and $x_{t-1}$ induced by the soft-optimal policy, and denote it $l(x_t,x_{t-1}) \in \Delta(\Xcal \times \Xcal)$. Then, it is clearly $l(x_t)l(x_{t-1} \mid x_{t}) = l(x_{t-1} )l(x_{t} \mid x_{t-1}).$
Now, from Theorem~\ref{thm:key2} that characterizes marginal distributions, and Theorem~\ref{thm:key3} that characterizes posterior distributions, this results in :
\begin{align*}
     \underbrace{\frac{1}{C}\exp\left (\frac{ v_{t}(x_{t})}{\alpha} \right )p^{\pre}_t(x_t) }_{\text{Marginal distribution at t}}\times   \underbrace{p^{\star}_t(x_{t-1}|x_{t})}_{\text{Optimal policy}} = \underbrace{\frac{1}{C} \exp\left (\frac{ v_{t-1}(x_{t-1})}{\alpha} \right ) p^{\pre}_{t-1}(x_{t-1})}_{\text{Marginal distribution at t-1} }\times  \underbrace{p^{\pre}_{t-1}(x_{t}|x_{t-1})}_{\text{Posterior distribution}}
\end{align*}
Rearranging yields:
\begin{align*}
    \frac{1}{C}\exp\left (\frac{ v_{t}(x_{t})}{\alpha} \right )\times p^{\star}_t(x_{t-1}|x_{t}) = \frac{1}{C} \exp\left (\frac{ v_{t-1}(x_{t-1})}{\alpha} \right ) \times  p^{\pre}_t(x_{t-1}|x_{t})
\end{align*}
Taking the logarithm, the statement is concluded. 
\end{proof}

\paragraph{Algorithm.} Being motivated by the relation in \eqref{eq:detailed}, after initializing $v_{0}=r$, we obtain the recursive equation: 
{\small 
\begin{align}\label{eq:loss_gflownets}
    (v_t,p^{\star}_t) = \argmin_{g^{(1)}:\Xcal \to \RR,  g^{(2)}:\Xcal \to \Delta(\Xcal)}\EE_{x_t \sim u_t}\left[  \left\{  \frac{ g^{(1)}(x_{t})}{\alpha}  + \log g^{(2)}(x_{t-1}|x_{t})-  \frac{ v_{t-1}(x_{t-1})}{\alpha} -  \log  p^{\pre}_t(x_{t-1}|x_{t})\right\}^2 \right ]
\end{align}
} 
where $u_t \in \Delta(\mathcal{X})$ is any exploratory roll-in distribution. Based on this algorithm, we outline the entire algorithm in Algorithm~\ref{alg:detailed_balance_loss}.

We make several important remarks regarding ~\pref{alg:detailed_balance_loss}. Firstly, while we use on-policy data collection, technically, any policy can be used in this off-policy algorithm, like reward-weighted MLE. Secondly, in practice, it might be preferable to utilize a sub-trajectory from $x_{t}$ to $x_{t-k}$ based on the following expression:
\begin{align*}
        &  \log p^{\star}_{t-k+1}(x_{t-k}|x_{t-k+1}) + \cdots + \log p^{\star}_t(x_{t-1}|x_{t}) + \left(\frac{ v_{t}(x_{t})}{\alpha}\right)\\
        &=  \left( \frac{ v_{t-k}(x_{t-k})}{\alpha}\right) + \log  p^{\pre}_{t-k+1}(x_{t-k}|x_{t-k+1})+\cdots + \log  p^{\pre}_t(x_{t-1}|x_{t}), 
\end{align*}
which is an extension of \eqref{eq:detailed}. The loss function based on the above $k$-step consistency equation could make training faster without learning value functions at every time point, as noted in the literature in PCL. In the extreme case (i.e., when we recursively apply it with $t=T$), we obtain the following.

\begin{corollary}[$T$-step consistency]
\begin{align}\label{eq:t-step}
        &  \log p^{\star}_t(x_{0}|x_{1}) + \cdots + \log p^{\star}_T(x_{T-1}|x_{T}) + \log p^{\star}_T(x_{T})   \\
        &=  \left( \frac{r(x_{0})}{\alpha}\right) + \log  p^{\pre}_1(x_{0}|x_{1})+\cdots + \log  p^{\pre}_T(x_{T-1}|x_{T}) + \log  p^{\pre}_T(x_{T}).  \nonumber 
\end{align}
\end{corollary}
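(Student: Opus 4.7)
The plan is to obtain the $T$-step consistency identity by iterating the one-step consistency equation \eqref{eq:detailed} telescopically over $t=1,\ldots,T$, applying the terminal condition from the soft Bellman equation \eqref{eq:soft_bellman_diffusion}, and closing the boundary using \pref{thm:key2}. Concretely, I would fix an arbitrary trajectory $(x_0,x_1,\ldots,x_T)$ and, for each $t \in \{1,\ldots,T\}$, write the one-step consistency
\begin{align*}
\frac{v_t(x_t)}{\alpha} + \log p^{\star}_t(x_{t-1}\mid x_t) = \frac{v_{t-1}(x_{t-1})}{\alpha} + \log p^{\pre}_t(x_{t-1}\mid x_t).
\end{align*}
Summing these $T$ equations, the intermediate soft-value terms $v_1(x_1),\ldots,v_{T-1}(x_{T-1})$ appear on both sides and cancel, leaving only $v_T(x_T)/\alpha$ on the left and $v_0(x_0)/\alpha$ on the right.

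Next I would invoke the terminal boundary condition $v_0(x_0)=r(x_0)$ from the soft Bellman equation \eqref{eq:soft_bellman_diffusion}. After substitution the telescoped identity becomes
\begin{align*}
\frac{v_T(x_T)}{\alpha} + \sum_{t=1}^T \log p^{\star}_t(x_{t-1}\mid x_t) = \frac{r(x_0)}{\alpha} + \sum_{t=1}^T \log p^{\pre}_t(x_{t-1}\mid x_t),
\end{align*}
which is exactly the ``$k$-step consistency equation'' displayed just before the corollary, now specialized to $k=T$ starting from $t=T$.

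The final step is to convert the remaining $v_T(x_T)/\alpha$ term into $\log p^{\star}_T(x_T)-\log p^{\pre}_T(x_T)$. For this I would apply \pref{thm:key2} at $t=T$, which states that $p^{\star}_T(x_T) = \exp(v_T(x_T)/\alpha)\,p^{\pre}_T(x_T)/C$. Taking logarithms and rearranging yields $v_T(x_T)/\alpha = \log p^{\star}_T(x_T) - \log p^{\pre}_T(x_T) + \log C$, and substituting this into the identity above gives the desired corollary.

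The only real obstacle I foresee is the normalizing constant $\log C$ coming from \pref{thm:key2}: the corollary as written contains no such constant, so the identity should be read as holding up to an additive constant independent of $(x_0,\ldots,x_T)$. This is the standard interpretation used for trajectory-balance style consistency equations in the PCL and GFlowNets literature (the constant is absorbed into a learnable log-partition parameter), and matches how the resulting loss is eventually deployed in \pref{alg:detailed_balance_loss}; alternatively, one can adopt the normalization convention $C=1$ under which the statement holds as written.
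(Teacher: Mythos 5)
Your proposal is correct, but it takes a genuinely different route from the paper. You telescope the already-established one-step consistency equation \eqref{eq:detailed} over $t=1,\ldots,T$, cancel the intermediate soft values, impose the boundary condition $v_0=r$, and then convert the leftover $v_T(x_T)/\alpha$ into $\log p^{\star}_T(x_T)-\log p^{\pre}_T(x_T)$ via Theorem~\ref{thm:key2}. The paper instead reruns, on the whole trajectory at once, the same joint-factorization argument that underlies the one-step lemma: it writes the joint law $l(x_T,\ldots,x_0)$ of the soft-optimal process in two ways --- marginal at $T$ times the optimal forward kernels versus marginal at $0$ times the backward posteriors --- and identifies each factor using Theorem~\ref{thm:key2} (marginals) and Theorem~\ref{thm:key3} (preservation of posteriors), then rearranges. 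The two arguments are equivalent in content; yours is the more economical one given that the one-step lemma is already proved, it makes the cancellation of intermediate value functions explicit, and it produces the general $k$-step sub-trajectory identity as a byproduct, whereas the paper's version makes the ``trajectory balance'' reading (a statement about two factorizations of one joint distribution) more visible without invoking the one-step lemma. Your caveat about the normalizing constant is well taken and applies equally to the paper: the paper's own right-hand side carries the factor $\exp(r(x_0)/\alpha)/C$ and the $\log C$ is silently dropped in the final ``rearranging'' step, so the displayed identity holds only up to an additive constant independent of the trajectory (or under a convention that absorbs it), exactly as in trajectory-balance losses.
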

\begin{proof}
 We consider the marginal distribution with respect to $x_T,\cdots,x_0$ induced by the soft-optimal policy, and denote it $l(x_{T},\cdots,\cdots,x_0)\in \Xcal \times \cdots \times \Xcal$. We have 
 \begin{align}\label{eq:whole_equatoin}
    l(x_{T})l(x_{T-1}\mid x_{T})\cdots l(x_0|x_1) = l(x_0)l(x_1\mid x_0)\cdots l(x_T\mid x_{T-1})   
 \end{align}
From Theorem~\ref{thm:key2} that characterized marginal distributions, and Theorem~\ref{thm:key3} that characterize posterior distributions, the left hand side of \eqref{eq:whole_equatoin} is equal to 
 \begin{align*}
  \underbrace{p^{\star}_T(x_T) }_{\text{Marginal distribution at T}} \times   \underbrace{p^{\star}_{T-1}(x_{T}|x_{T-1})}_{\text{Optimal policy at $T-1$}} \times \cdots \times  \underbrace{ p^{\star}_1(x_{0}|x_{1})}_{\text{Optimal policy at $1$ }} 
 \end{align*}
and the right-hand side in \eqref{eq:whole_equatoin} is equal to 
 \begin{align*}
    \underbrace{\frac{\exp(r(x_0)/\alpha)}{C} p^{\pre}_0(x_0)}_{\text{Marginal distribution at 0}} \times  \underbrace{p^{\pre}_0(x_1 \mid x_0)}_{\text{Posterior distribution}} \times\cdots  \times \underbrace{p^{\pre}_{T-1}(x_T \mid x_{T-1})}_{\text{Posterior distribution}} . 
 \end{align*}
 By rearranging the term, we obtain \eqref{eq:t-step}. 
\end{proof}

\paragraph{Comparison with Gflownets.} In the Gflownets literature, similar losses are used. For instance, the loss derived from \eqref{eq:detailed} or \eqref{eq:t-step} is commonly known as a detailed balance loss \citep{bengio2023gflownet} or a trajectory loss \citep{malkin2022trajectory}, respectively. 

Note in general, the literature in Gflownets primarily focuses on sampling from unnormalized models  (distributions proportional to $\exp(r(x))$). Hence, reference policies (i.e, $\{p^{\pre}_t\}$ ) or latent states (i.e., $x_{T:1}$ before $x_0$) are introduced without relying on pre-trained diffusion models. In contrast, in our context, we use policies derived from pre-trained diffusion models as reference policies, leveraging them as our prior knowledge.

\section{Fine-Tuning Settings Taxonomy } \label{sec:settings}

So far, we implicitly assume we have access to reward functions. However, these functions are often unknown and need to be learned from data. We classify several settings in terms of whether reward functions are available or, if not, how they could be learned. This section is summarized in Figure~\ref{fig:practical}. 

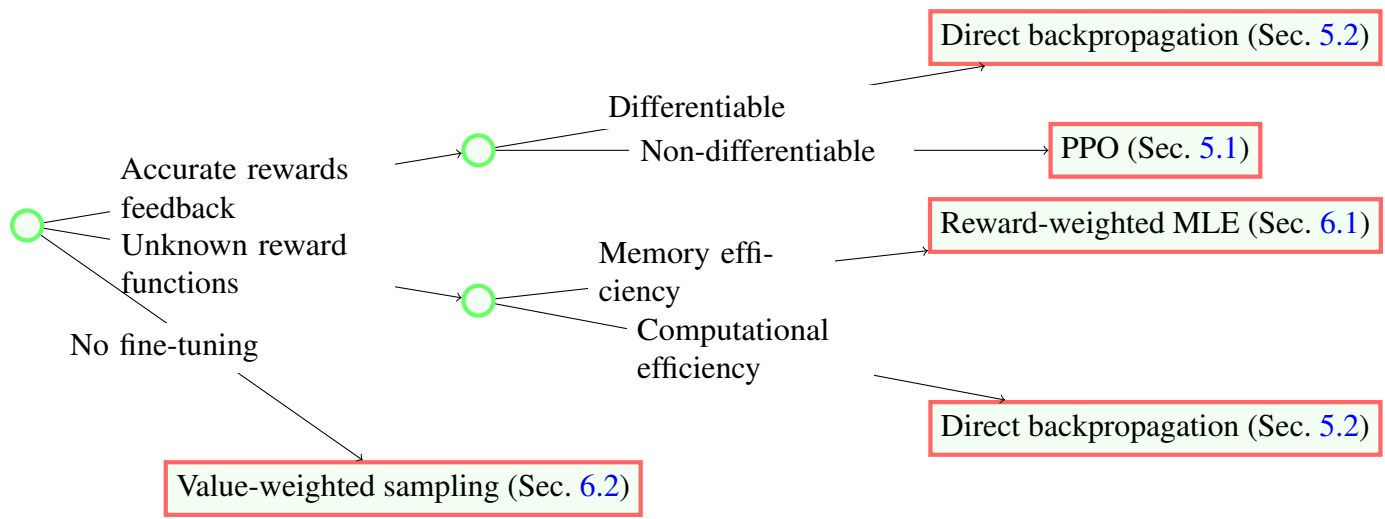
\begin{figure}[!th]
\tikzset{
}
\begin{tikzpicture}[
mycircle/.style={circle, draw=green!60, fill=green!5, ultra thick, minimum size=3mm,align=left},
mycircle2/.style={rectangle, draw=red!60, fill=green!5, ultra thick, minimum size=7mm,align=left},
myline/.style={dotted, blue!60,->},
  predicate/.style = {fill=white,  text width=3.5cm,  align=left }, 
  predicate2/.style = {fill=white,  text width=3cm,  align=left }
]
  \node[mycircle] (???) at (-1,0) {} ;
  \node[mycircle] (cat1) at (5,1) {} ;
  \node[mycircle] (cat6) at (5,-1) {} ;
  \node[mycircle2] (cat2) at (14,2.5) {Direct backpropagation (Sec.~\ref{sec:reward_prop})} ;
  \node[mycircle2] (cat3) at (14,1) {PPO (Sec.~\ref{sec:PPO})} ;
  \node[mycircle2] (cat4) at (14,0) {Reward-weighted MLE (Sec.~\ref{sec:reward-weighted})} ;
  \node[mycircle2] (cat7) at (14,-2.7) {Direct backpropagation (Sec.~\ref{sec:reward_prop})} ;
\node[mycircle2] (cat8) at (4,-3.5) {Value-weighted sampling (Sec.~\ref{sec:value})} ;
  \draw[->] (???) --node[predicate] {Accurate rewards feedback} (cat1) ;
  \draw[->] (???) --node[predicate] {Unknown reward functions } (cat6) ;
  \draw[->] (???) --node[predicate] {No fine-tuning} (cat8) ;
  \draw[->] (cat1) --node[predicate] {Differentiable } (cat2) ;
  \draw[->] (cat1) --node[predicate] {Non-differentiable} (cat3) ;
\draw[->] (cat6) --node[predicate2] {Memory efficiency} (cat4) ;
\draw[->] (cat6) --node[predicate2] {Computational efficiency} (cat7) ;
\end{tikzpicture}
\caption{ Practical recommendation of RL-based algorithms to optimize downstream reward functions.  } 
\label{fig:practical}
\end{figure}

\subsection{Fine-Tuning with Known, Differentiable Reward Functions}

When accurate differentiable reward functions are available (e.g., standard in computer vision tasks), our primary focus is typically on computational and memory efficiency. In general, we advocate for employing reward backpropagation (c.f.~\pref{alg:main}) for its computational efficiency. Alternatively, if memory efficiency is a significant concern, we suggest using PPO due to its stability.

\subsection{Fine-Tuning with Black-Box Reward Feedback}\label{sec:black_box}

Here, we explore scenarios where we have access to accurate but non-differentiable (black-box) reward feedback, often found in scientific simulations. The emphasis still remains primarily on computational and memory efficiency. In such scenarios, due to the non-differentiability of reward feedback, we recommend using PPO or reward-weighted MLE for the following reasons. 

In the preceding section, we advocated for the use of reward backpropagation due to its computational advantages. However, when dealing with non-differentiable feedback, the advantage of reward backpropagation may diminish. This is because learning from such feedback requires learning differentiable reward functions to make the algorithm work as we discuss in \pref{sec:reward_prop}. However, obtaining a differentiable reward function can be challenging in many domains. For example, in molecular property prediction, molecular fingerprints are informative features, and accurate reward functions can be derived by training simple neural networks on these features \citep{pattanaik2020molecular}. Yet, these mappings, grounded in prior scientific understanding, are non-differentiable, thereby restricting their applicability.

In contrast, PPO and reward-weighted MLE allow for policy updates without the explicit requirement to learn a differentiable reward function, offering a significant advantage over reward backpropagation.

\vspace{-3mm}
\subsection{Fine-Tuning with Unknown Rewards Functions}\label{sec:unknown}

When reward functions (or computational proxies in \pref{sec:black_box}) are unavailable, we must learn from data with reward feedback. In such cases, compared to the previous scenarios, two important considerations arise: 
\begin{itemize}
    \item  Not only computational or memory efficiency but also feedback efficiency (i.e., sample efficiency regarding reward feedback) is crucial. 
    \item  Given that learned reward feedback may not generalize well outside the training data distributions, it is essential to constrain fine-tuned models to prevent significant divergence from diffusion models. In these situations, not only soft PPO and direct backpropagation but also methods discussed in \pref{sec:sum_planning_conservative}, such as reward-weighted MLE, can be particularly effective. 
\end{itemize}
Further, the scenario involving unknown reward functions can be broadly categorized into two cases, which we will discuss in more detail.

\vspace{-2mm}
\begin{wrapfigure}{r}{0.43\textwidth}
\includegraphics[width=0.9\linewidth]{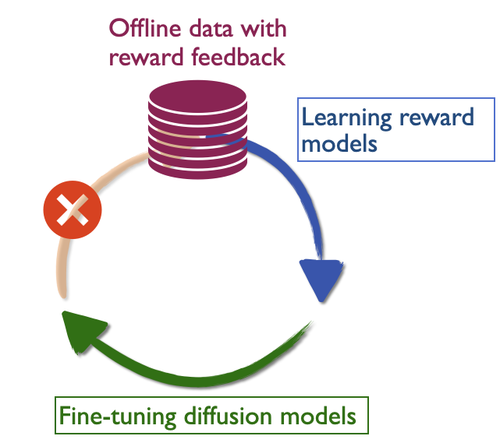}
\label{fig:offline}
\caption{Offline scenario}
\includegraphics[width=0.9\linewidth]{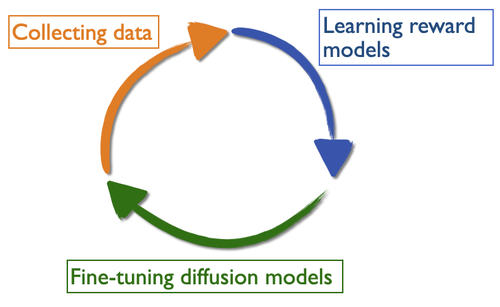}
\label{fig:online}
\caption{Online scenario}
\end{wrapfigure}

\paragraph{Offline scenario (Figure 4).} In many scenarios, even in cases where reward functions are unknown, we often have access to offline data with reward feedback $\{x^{(i)},r(x^{(i)})\}$. One straightforward approach is to perform regression with neural networks and build a learned regressor $\hat r$. However, this learned reward function $\hat r$ might not generalize well beyond the distribution of the offline data. In regions outside this distribution, $\hat r$ could assign a high value even when the actual reward $r$ is low due to high uncertainty. Consequently, we could be easily misled by out-of-distribution samples if we solely optimize $\hat r$.

One approach to alleviate this issue is to adopt a pessimistic (i.e., conservative) strategy, as proposed in \citet{uehara2024bridging}. This technique has been widely applied in the field of offline RL \citep{levine2018reinforcement}. The main idea is to penalize $\hat r$ outside the distributions using techniques such as adding an explicit penalty term \citep{yu2020mopo,chang2021mitigating}, bootstrapping, or more sophisticated methods \citep{kumar2020conservative,xie2021bellman,rigter2022rambo,uehara2021pessimistic}. By doing so, we can avoid being fooled by out-to-distribution regions while still benefiting from the extrapolation capabilities of reward models.

\paragraph{Online scenario (Figure 5).}

Consider situations where reward functions are unknown, but we can gather data online. These scenarios are often referred to as a lab-in-the-loop setting.

In such scenarios, \citet{uehara2024feedback} proposes an iterative procedure comprising three steps: (1) collecting feedback data by exploring new areas (i.e., obtaining $x$ from current diffusion models and corresponding $r(x)$), (2) learning the reward function from the collected feedback data, and (3) fine-tuning current diffusion models by maximizing the learned reward function enhanced by an optimistic bonus term, using algorithms discussed in \pref{sec:sum_planning}. In contrast to standard online PPO, where feedback data is directly used in the fine-tuning process, actual reward feedback is only used in step (2) and not in step (3). An additional key aspect is the use of an optimistic reward function that encourages exploration beyond the distributions of the current diffusion model. This deliberate separation between reward learning and fine-tuning steps, coupled with the use of optimism, significantly enhances feedback efficiency.

\section{Connection with Classifier Guidance}\label{sec:connection}

Classifier guidance \citep{dhariwal2021diffusion} is commonly used for conditional generation in diffusion models. Interestingly, RL-based methods share a close connection with classifier guidance. More specifically, in this section, following \citet{zhao2024adding}, we elucidate how RL-based methods can be applied to conditional generation. Furthermore, from this unified viewpoint, we highlight that classifier guidance is seen as a value-weighted sampling in \pref{sec:sum_planning}.

\subsection{Classfier Guidance}\label{sec:classfier-based}

In this section, we first introduce classifier guidance. 
Classifier guidance utilizes an unconditional pre-trained model for conditional generation. More specifically, we consider a scenario where we have a pre-trained diffusion model, which enables us to sample from $p^{\pre}(x)$, and data with feedback $\{x,y \}$, where $y$ represents the new label we want to condition on. Our objective here is to sample from  $p(\cdot|y) \in [\Ycal \to \Delta(\RR^d)]$, 
\begin{align}\label{eq:original}
    p(\cdot \mid y) \propto p_{y}(y|\cdot)p^{\pre}(\cdot). 
\end{align}
where $p_y: \Xcal \to \Delta(\Ycal)$ denotes the conditional distribution of $y$ given $x$. 

\paragraph{Motivation.}  In classifier guidance, we consider the following policy: 
\begin{align*}
    p^{\text{gui}}_t(\cdot |x_t,y) &=\Ncal\left ( \rho(x_t, t ;\theta_{\pre})+ \sigma^2(t)  \nabla _x \log q_y(x_t, t ), \sigma^2(t) \right),\\
    q_y(x, t) &:=  \EE_{\{ p^{\pre}_t \} , Y \sim p_y(\cdot|x_0) }[\mathrm{I}(Y=y)|x_t=x]. 
\end{align*} 
 Under the continuous-time formulation of diffusion models, applying Doob's h-transform \citep{rogers2000diffusions}, it is shown that we can sample from $p(x|y)$ by sequentially running $\{p^{\text{gui}}_t(\cdot |x_t,y)\}_{t=T+1}^1$.

\paragraph{Algorithm.} Specifically, the algorithm comprises two steps. It's important to note that this is not a fine-tuning method but rather an inference-time technique as we freeze the pre-trained model.
\begin{enumerate}
    \item Learning $q_y(x,t)$ through regression by gathering data containing $\{x_t,y\}$. Here, for each $(x_0,y)$, using a forward policy for the pre-training (from $0$ to $T$), we obtain $x_t$ starting from $x_0$. 
    \item During inference, sequentially execute $\{p^{\text{gui}}_t(\cdot |x_t,y)\}_{t=T+1}^1$ by leverage the fact that compared to the policy in pre-trained diffusion models, the mean of  $\{p^{\text{gui}}_t(\cdot |x_t,y)\}_{t=T+1}^1$ is just shifted with an additional term $\nabla_x \log q_y(x,t)$.
\end{enumerate}

Notably, the method described above can be reduced to value-weighted sampling (cf. Section~\ref{sec:value}), when $r(x,y)=\log p(y|x)$. Inspired by such an observation, we apply an RL-based fine-tuning method for conditional generation as below.

\subsection{RL-Based Fine-Tuning for Conditional Generation}

In this section, following~\citet{zhao2024adding}, we explain an RL-based fine-tuning approach for conditional generation. The core idea is that setting $r(x,y)= \log p(y|x)$ allows us to sample from the target conditional distribution \eqref{eq:original}, drawing on the insights discussed in Section~\ref{sec:sum_planning}.

More precisely, introducing a distribution over $y$ as $q(\cdot)$, we formulate the following RL problem:
\begin{align}\label{eq:key_plnanning2}
       \{p^{\star}_t\}_t=   & \argmax_{\{p_t \in  [(\Ycal,\Xcal) \to \Delta(\Xcal)] \}_{t=T+1}^1 }\EE_{\{p_t(\cdot|\cdot,y) \},y\sim \Pi}[\log p(y|x_0)   - \Sigma_{t=T+1}^1 \mathrm{KL}(p_t(\cdot|x_t,y )\|p^{\pre}_t (\cdot |x_t))  ]. 
\end{align}
Compared to the objective function \eqref{eq:key_plnanning} in \pref{sec:sum_planning}, the main differences are: (1) policy space is expanded to accommodate additional control (i.e., $y$): $p_t(\cdot|\cdot,\cdot)\subset [\RR^d \times \Ycal \to \Delta(\RR^d)]$, (2) the fine-tuning objectives are set as log-likelihoods: $\log p(y|x)$. Then, following \pref{thm:main}, we can establish the following result.

\begin{corollary}
The distribution induced by $\{p^{\star}_t\}_{t=T+1}^1$ (i.e., $\int \{\prod_{t=T+1}^1 p^{\star}_t(x_{t-1}\mid x_t,y) \}dx_{1:T}$ is the target conditional distribution~\eqref{eq:original}. 
\end{corollary}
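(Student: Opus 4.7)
The plan is to reduce the conditional problem to the unconditional one already handled by \pref{thm:main}, exploiting the fact that the conditioning variable $y$ is fixed throughout each sampled trajectory and does not couple to the transition dynamics. Fix $y \in \Ycal$ and treat it as a static context. For this fixed $y$, the inner expression in \eqref{eq:key_plnanning2} is precisely the objective of \eqref{eq:key_plnanning} with reward $r_y(x_0) := \log p(y|x_0)$, pre-trained denoisers $\{p^{\pre}_t\}$, and regularization strength $\alpha = 1$. The policy class $[(\Ycal,\Xcal) \to \Delta(\Xcal)]$ decomposes as a Cartesian product over $y$ of the unconditional policy class $[\Xcal \to \Delta(\Xcal)]$, so the overall maximization factorizes.

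Next, I would interchange the outer $\EE_{y \sim \Pi}$ with the $\argmax$ using this product structure. Concretely, for each $y$ let $J(y; \{p_t(\cdot|\cdot,y)\})$ denote the inner expected reward minus KL penalty along a trajectory with context $y$. Since sampling $y \sim \Pi$ and then running the context-$y$ denoising chain is independent of how the chain behaves for any other $y' \neq y$, the total objective equals $\int J(y; \{p_t(\cdot|\cdot,y)\}) \Pi(dy)$. Maximizing this integral over the product policy class is equivalent to maximizing the integrand pointwise for $\Pi$-a.e.\ $y$, so $p^{\star}_t(\cdot | x_t, y)$ is just the soft-optimal policy from \eqref{eq:original_goal} with reward $r_y$ and reference $p^{\pre}_t$.

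Now I would invoke \pref{thm:main} with reward $r_y$ and $\alpha = 1$. It gives that the marginal at time $0$ induced by $\{p^{\star}_t(\cdot|\cdot,y)\}_{t=T+1}^1$ equals
\begin{align*}
p^{\star}(x_0 \mid y) \;=\; \frac{\exp(r_y(x_0))\, p^{\pre}(x_0)}{\int \exp(r_y(x))\, p^{\pre}(x)\, dx} \;=\; \frac{p(y \mid x_0)\, p^{\pre}(x_0)}{\int p(y \mid x)\, p^{\pre}(x)\, dx},
\end{align*}
which is exactly Bayes' rule \eqref{eq:original}. Combined with the pointwise-in-$y$ reduction above, this yields the Corollary.

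The main obstacle, modest but worth addressing, is justifying the pointwise maximization in $y$: one must verify that $\Pi$ charges every $y$ of interest (otherwise the argmax is undetermined on a $\Pi$-null set, though this does not affect the induced distribution whenever we later condition on a specific $y$ that $\Pi$ assigns positive density to; taking $\Pi$ with full support on $\Ycal$ avoids the issue). A secondary point is ensuring that $\log p(y|x_0)$ is well-defined and finite enough that the exponential tilt is a valid probability density, which holds whenever $p_y(\cdot|\cdot)$ is a bona fide conditional density and $\int p(y|x) p^{\pre}(x) dx > 0$. Everything else is a direct corollary of the unconditional result already proved in \pref{thm:main}.
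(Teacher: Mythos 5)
Your proposal is correct and follows essentially the same route as the paper: the paper simply invokes \pref{thm:main} pointwise in $y$ with reward $r(x,y)=\log p(y|x)$, which, after exponential tilting, recovers $p(y|x_0)p^{\pre}(x_0)$ up to normalization, i.e.\ the target~\eqref{eq:original}. Your additional care about the factorization of the $\argmax$ over the product policy class and the support of $\Pi$ fills in details the paper leaves implicit but does not change the argument.
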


The above corollary suggests that for conditional generation, we can utilize various off-the-shelf algorithms discussed in Section~\ref{sec:sum_planning} and \ref{sec:sum_planning_conservative}, such as PPO, reward backpropagation, and reward-weighted MLE, as well as value-weighted sampling.

\section{Connection with Flow-Based Diffusion Models}\label{sec:flow_matching}

We elucidate the close relationship between bridge matching and fine-tuning, as summarized in Table~\ref{tab:summary_connection}.  Bridge (flow) matching \citep{liu2022flow,tong2023conditional,lipman2022flow,liu2022let,shi2024diffusion,albergo2022building}, has recently gained popularity as a distinct type of diffusion model from score-matching-based ones. These works suggest that training and inference speed will be accelerated due to the ability to define a more direct trajectory from noise to data distribution than score-matching-based diffusion models.

The connection between bridge matching and fine-tuning becomes evident in the continuous-time formulation. To begin, we first provide a brief overview of the continuous-time formulation of score-matching-based diffusion models.

\begin{table}[!t]
    \centering
       \caption{
      Comparison between training score/flow-based diffusion models and fine-tuning. We optimize the parameter $\theta$ in $\PP^{\theta}$, which represents the induced distribution over trajectories associated with $\theta$. 
       }
    \label{tab:summary_connection}
    \begin{tabular}   
    {p{0.15\textwidth}p{0.18\textwidth}p{0.5\textwidth}}
         &   Loss & Note of $\QQ$  \\ \midrule 
   Score-based training    &  $\KL(\QQ^{\mathrm{time}}\|\PP^{\theta})$  & $\QQ^{\mathrm{time}}$: Time-reversal of reference SDE  (reference SDE is defined from $0$ to $T$ where the initial distribute at $0$ is data distribution) \\ 
   Flow-based training & $\KL(\QQ^{\mathrm{doob}}\|\PP^{\theta})$ & $\QQ^{\mathrm{doob}}$: Coupling between the reference SDE from $T$ to $0$ conditioning at $0$ (dist. over $x_{T:1}|x_0$) and a data distribution at time $0$ (dis. over $x_0$). Note that reference SDE is defined from $T$ to $0$.   \\
   Fine-tuning  & $\KL(\PP^{\theta} \| \QQ^{\mathrm{fine}})$ & $\QQ^{\mathrm{fine}}$: Coupling between the pre-trained SDE from $T$ to $0$ conditioned on $0$ (dist. over $x_{T:1}|x_0$) and a data distribution at time $0$ (dis. over $x_0$)  \\
       \bottomrule
    \end{tabular}
\end{table}

\subsection{Continuous Time Formulation of Score-Based diffusion models} 

We provide a brief overview of the continuous-time formulation of diffusion models \citep{song2021maximum}, further expanding \pref{sec:training}. In this formulation, we initially define the forward reference SDE, spanning from time $0$ to $T$ (e.g., variance exploding (VE) process or variance preserving (VP) process), by setting an initial distribution at $0$ as the data distribution. Additionally, the reference SDE is tailored to converge to an easy distribution at time $T$, such as the normal distribution. Subsequently, we consider the time-reversal SDE \citep{anderson1982reverse}, progressing from $T$ to $0$. If we could learn this time-reversal SDE, it would enable us to sample from the data distribution by starting with the easy initial distribution (at $T$) and following the time-reversal SDE from $T$ to $0$ at inference time. 

\paragraph{Training.}
Now, the remaining question is how to learn this time-reversal SDE. Here, by representing the induced distribution over trajectories from $T$ to $0$ with this time-reversal SDE as $\QQ^{\text{time}}$, our goal is to train a new SDE, parameterized by $\theta$ in the neural network, such that the induced distribution $\PP^{\theta}$ over trajectories associated with $\theta$ aligns with $\QQ^{\text{time}}$. Specifically, the objective is to minimize the following losses:
\begin{align}\label{eq:kl_loss}
\argmin_{\theta}\KL(\QQ^{\text{time}} \| \PP^{\theta}).
\end{align}
With some algebra, leveraging the observation that the time-reversal SDE comprises the original forward SDE and the scoring term of the \emph{marginal} distribution \citep{anderson1982reverse} as in \eqref{eq:time_reversal},  \citet{song2021maximum} demonstrates that this KL loss \eqref{eq:kl_loss} is approximated as 
\begin{align}\label{eq:loss_diffusion2}
  \hat \theta_{\pre} =  \argmin_{\theta} \EE_{t\sim \mathrm{Uni}([0,T]),x_t \sim q_{t\mid 0}(x_t|x_0), x_0 \sim p^{\pre} }[ \{\sigma^{\diamond}_t\}^2 \| \nabla_{x_t} \log q_{t\mid 0}(x_t \mid x_0) - s_{\theta}(x_t,t) \| ^2     ], 
\end{align}
which is equal to \eqref{eq:loss_diffusion} where the weight $\lambda(t)$ is $\{\sigma^{\diamond}_t\}^2$ (for notation, refer to \pref{sec:training}). 

\subsection{Flow-Based (Bridge-Based) Diffusion Models}

{ We adopt the explanation outlined in \citet{liu2022flow}. In this framework, we begin by considering a reference SDE (from $T$ to $0$), $x_t=z_{T-t}$  where $z_t$ is defined as Brownian motion:
\begin{align}\label{eq:gaussian}
   d z_t =   dw_t, \quad t\in [0,T].
\end{align}
Subsequently, we consider the coupling of the reference SDE conditioned on state at time $0$ (i.e., distribution of $x_{0:T}$ conditional on $x_0$) and the data distribution at time $0$, denoting the resulting distribution as $\QQ^{\text{doob}}$. Informally, this $\QQ^{\text{doob}}$  is characterized as follows. 

\paragraph{Informal characterization of $\QQ^{\text{doob}}$.}
When discretizing the time step, by denoting the distribution induced by the reference SDE as $p^{\mathrm{ref}}(x_T,\cdots,x_0)$, we define 
\begin{align*}
\QQ^{\text{doob},T}(x_T,\cdots,x_0):= p^{\mathrm{ref}}(x_T,\cdots , x_1 \mid x_0)p^{\pre}(x_0).
\end{align*}
The distribution $\QQ^{\text{doob}}$ above is a limiting distribution obtained by making the discretization step small.

\paragraph{Training.} In bridge-based diffusion models, our objective is to train a new SDE, parametrized by $\theta$ (e.g., a neural network), to align the induced distribution $\PP^{\theta}$ with $\QQ^{\text{doob}}$. Specifically, bridge-based training aims to minimize
\begin{align}\label{eq:kl_loss2}
\argmin_{\theta} \KL(\QQ^{\text{doob}} \| \PP^{\theta}). 
\end{align}

While the method to minimize the above is currently unclear, we can derive a trainable objective through algebraic manipulation. Initially, we observe that the reference SDE \eqref{eq:gaussian}, conditioned on state $x_0$ at time $0$ ($z_T = b$), is expressed as:
\begin{align}\label{eq:conditional_SDE}
      t\in [0,T];  dz^b_t =  g^b(z^b_t,t) dt + dw_t, \quad g^b(z^b_t,t)=\frac{b-z^b_t  }{T-t}, 
\end{align}
using the seminal ``Doob's h-transform'' \citep{rogers2000diffusions}. This SDE is commonly known as the Brownian bridge. Now, we consider a new SDE parameterized by $\theta$: 
\begin{align}\label{eq:conditional_SDE2}
    t\in [0,T]; dz_t = s(z_t,t;\theta)dt + dw_t.
\end{align}
Then, we can demonstrate that minimizing the KL loss in \eqref{eq:kl_loss2} is equivalent to 
 \begin{align}\label{eq:loss_flow}
 \argmin_{\theta}\EE_{t \sim \mathrm{Uni}[0,T], z_{t}\sim q^{x_0}_t, x_0 \sim p^{\pre}  }\left[ \|g^{x_0}(z_t,t)-s(z_t,t;\theta)  \|^2_2 \right ]
\end{align}
where $q^{x_0}_t$ denotes the induced distribution at time $t$ following SDE \eqref{eq:conditional_SDE} conditioned on $x_0$, and $s: \Xcal \times [0,T] \to \Xcal$ is the model we aim to optimize.}

\paragraph{Flow-based diffusion models.}  The aforementioned formulation has been originally proposed as a bridge-based SDE \citep{liu2022let} because the conditional SDE is often referred to as a Brownian bridge. Flow-based diffusion models proposed in \citet{lipman2022flow, tong2023conditional} share a close relationship with bridge-based diffusion models. These flow-based models use a target SDE \eqref{eq:conditional_SDE} and an SDE with a parameter \eqref{eq:conditional_SDE2} without stochastic terms (i.e., no $dw_t$) and aim to minimize the loss function \eqref{eq:loss_flow}. 

\paragraph{Comparison with score-based diffusion models.} Both score-based diffusion models and bridge-based diffusion models aim to minimize KL divergences. Despite its similarity, in contrast to score-based diffusion models, bridge-based diffusion models target $\QQ^{\text{doob}}$ rather than the time-reversal SDE $\QQ^{\text{time}}$. The distribution $\QQ^{\text{doob}}$ is often considered preferable because it can impose an SDE that navigates efficiently from $0$ to $T$ (e.g.,  a Brownian bridge with minimal noise). This can expedite the learning process since we lack direct control over the time-reversal SDE in score-based diffusion models.

\subsection{Connection with RL-Based Fine-Tuning} 

RL-based fine-tuning shares a close relationship with flow-based training. To illustrate this, consider the reference SDE (from $T$ to $0$) 
following the pre-trained diffusion model. Then, akin to flow-based training, we consider the coupling of the reference SDE conditioned on state at time $0$ (i.e., distribution over $x_{0:T}$ conditioned on $x_0$) and the \emph{target} distribution at time $0$ (i.e., $p_r(\cdot) \propto \exp(r(\cdot)/\alpha)p^{\pre}(\cdot)$), denoting the induced distribution as $\QQ^{\text{fine}}$.  

\paragraph{Informal characterization of $\QQ^{\text{fine}}$. }
Informally, $\QQ^{\text{fine}}$ is introduced as follows. When discretizing the time step, by denoting the distribution induced by the reference SDE as $p^{\mathrm{ref}}(x_T,\cdots,x_0)$, we define 
\begin{align*}
\QQ^{\text{fine},T}(x_T,\cdots,x_0):= p^{\mathrm{ref}}(x_T,\cdots , x_1 \mid x_0)p_r(x_0).
\end{align*}
The distribution $\QQ^{\text{fine}}$ above is a limiting distribution obtained by making the discretization step small.

\paragraph{Training.} Now, we introduce a new SDE parameterized by $\theta$ such as an neural network to align the induced distribution $\PP^{\theta}$ with $\QQ^{\text{fine}}$. Actually, in the continuous formulation, the RL-problem \eqref{eq:key_plnanning} in \pref{sec:sum_planning} is equal to solving 
\begin{align*}
\argmin_{\theta} \KL( \PP^{\theta}\| \QQ^{\text{fine}}).
\end{align*}
This formalization has been presented in \citet{uehara2024finetuning}. Intuitively, as we see in \pref{thm:key3}, this is expected because we see that the posterior distribution induced by pre-trained models: 
\begin{align*}
    p^{\mathrm{ref}}(x_T,\cdots , x_1 \mid x_0)\,(=\,p^{\mathrm{ref}}_{T-1}(x_T|x_{T-1})p^{\mathrm{ref}}_{T-2}(x_{T-1}\mid x_{T-2})\cdots p^{\mathrm{ref}}_0(x_1 \mid x_0  )), 
\end{align*}
remains preserved after fine-tuning based on the RL-formulation \eqref{eq:key_plnanning}.

\paragraph{Comparison with flow-based training.}
In contrast to flow-based training, RL-based fine-tuning minimizes the inverse KL divergence rather than the KL divergence. Intuitively, this is because, unlike $\QQ^{\text{time}}$ or $\QQ^{\text{doob}}$ , we cannot sample from $\QQ^{\text{fine}}$ (recall that a marginal distribution at time $0$ in $\QQ^{\text{fine}}$, i.e., $p_r$, is unnormalized); on the other hand,  the marginal distributions at time $0$ in both $\QQ^{\text{time}}$ and $\QQ^{\text{doob}}$ are data distributions.  

\begin{remark}[Extension to $f$-divergence]
Each of these methods can be technically extended by incorporating more general divergences beyond KL divergence. For instance, in the context of fine-tuning, please see \citet{tang2024fine}.
\end{remark}

\section{Connection with Sampling from Unnormalized Distributions}\label{sec:sampling}

 Numerous studies delve into sampling from an unnormalized distribution proportional to $\exp(r(x))$, commonly known as the Gibbs distribution, extensively discussed in the computational statistics and statistical physics community \citep{robert2014statistical}. This issue is pertinent to our work, as during fine-tuning, the target distribution is also formulated in the Gibbs distribution form in \pref{eq:original_target2}, which is proportional to $\exp(r(x))p^{\pre}(x)$. 
 
The literature employs two main approaches to tackle this challenge: Markov Chain Monte Carlo (MCMC) and RL-based methods. In particular, the RL-based diffusion model fine-tuning we have discussed so far is closely related to the latter approach. In this section, we explain and compare these two approaches. 

\subsection{Markov Chain Monte Carlo (MCMC)}  In MCMC, a Markov chain is constructed to approximate a target distribution such that the equilibrium distribution of the Markov chain converges to the target distribution. However, dealing with high-dimensional domain spaces is challenging for MCMC. A common strategy involves leveraging gradient information, such as the Metropolis-adjusted Langevin algorithm (MALA) \citep{besag1994comments} or Hamiltonian Monte Carlo (HMC) \citep{neal2011mcmc,girolami2011riemann}. 
MALA shares similarities with classifier guidance and value-weighted sampling, as both methods depend on the first-order information from reward (or value) functions.

\paragraph{Can we use MCMC for fine-tuning diffusion models?} 
In the context of fine-tuning diffusion models, while it may seem intuitive to sample from the target distribution \eqref{eq:original_target2} (i.e., $p_r \propto \exp(r(x))p^{\pre}(x)$) using MCMC, it's not straightforward for the following reasons:
\begin{itemize}
    \item We lack of an analytical form of $p^{\pre}(\cdot)$. 
    \item Additionally, even if we can estimate $p^{\pre}(\cdot)$ in an unbiased manner, as in \citet{chen2018neural}, the mixing time for obtaining a single sample in MCMC might be lengthy.
\end{itemize}
Therefore, we explore creating a generative model (i.e., diffusion model) for this task using RL-based fine-tuning so that we can easily sample from $p_{r}(\cdot)$ during inference (i.e., simulating the time-reversal SDE \eqref{eq:time_reversal} with policies from $t=T+1$ to $t=1$), without relying on MCMC. {This addresses the concern of MCMC by avoiding the need to estimate $p^{\pre}(\cdot)$ and minimizing inference time, albeit at the expense of increased training time.
}

\subsection{RL-Based Approaches} While MCMC has historically been widely used for this purpose, recent works have proposed an RL-based approach or its variant (e.g., \citet{bernton2019schr,heng2020controlled,huang2023reverse,vargas2023denoising}). For example, a seminal study \citep{zhang2021path} introduced a method very similar to reward backpropagation. In their work, they initially define Brownian motion as a reference SDE (from $T $ to $0$), similar to the pre-trained diffusion models in our context. Then, by appropriately setting rewards, \citet{zhang2021path} aims to learn a new SDE (from $T$ to $0$) such that we can sample from the target distribution at the endpoint $0$. 

These approaches offer advantages over MCMC due to their ability to minimize inference time (amortization) by constructing a generative model comprising policies in the training time and solely executing learned policies during inference. Unlike MCMC, these methods significantly reduce the inference cost as we don't need to be concerned about the mixing time. Additionally, RL-based approaches have the potential to effectively handle high-dimensional, complex (i.e., multi-modal) distributions by harnessing the considerable expressive power arising from latent states.

Despite these advantages, it is often unclear how to select a reference SDE in these works. In contrast, in RL-based fine-tuning of diffusion models, we directly leverage pre-trained diffusion models as the reference SDEs, i.e., as our prior knowledge.

\paragraph{Key message.} Finally, it is worth noting that we can technically utilize any off-the-shelf RL algorithms for training. For example, \citet{zhang2021path} employs reward backpropagation, while works in Gflownets typically leverage PCL, as we mention in \pref{sec:gflownet}. However, additionally, it is possible to use PPO or reward-weighted MLE mentioned in \pref{sec:sum_planning}.

\section{Closely Related Directions}

{ Lastly, we highlight several closely related directions we have not yet discussed. }  

\paragraph{Aligning text-to-image models.} We recognize that current efforts in fine-tuning diffusion models primarily revolve around aligning text-to-image models using human feedback \citep{lee2023aligning,wallace2023diffusion,wu2023better,yang2023diffusion}. In this paper, our goal is to consider a more general and fundamental formulation that is not tailored to any particular task.

\paragraph{Diffusion models for RL.} 

{
Diffusion models are well-suited for specific reinforcement learning applications due to their ability to model complex and multimodal distributions as polices
\citep{janner2022planning,ajay2023is,wang2022diffusion,hansen2023idql,du2024learning,zhu2023diffusion}.
For example, \citet{wang2022diffusion} use conditional diffusion models as policies and demonstrate their effectiveness in typical offline RL benchmarks. While RL-based fine-tuning also aims to maximize certain reward functions, the primary focus of this tutorial covers methods that leverage pre-trained diffusion models for this purpose.
}

{ \paragraph{RL from human feedback (RLHF) for language models.} 

RLHF is widely discussed in the context of language models, where many algorithms share similarities with those used in diffusion models~\citep{ouyang2022training,bai2022constitutional,casper2023open}. For instance, both domains commonly employ PPO as a standard approach, and reward-weighted training (or reward-weighted MLE) often serves as a basic baseline approach. Despite these similarities, nuanced differences exist mainly because diffusion models typically operate in a continuous input space rather than a discrete one. Moreover, techniques like value-weighted sampling emerge uniquely within the context of diffusion models.
}

\section{Summary}

In this article, we comprehensively explain how fine-tuning diffusion models to maximize downstream reward functions can be formalized as a reinforcement learning (RL) problem in entropy-regularized Markov decision processes (MDPs). Based on this viewpoint, we elaborate on the application of various RL algorithms, such as PPO and reward-weighted MLE, specifically tailored for fine-tuning diffusion models. Additionally, we categorize different scenarios based on how reward feedback is obtained. While this categorization is not always explicitly mentioned in many existing works, it is crucial when selecting appropriate algorithms. Finally, we discuss the relationships with several related topics, including classifier guidance, Gflownets, path integral control theory, and sampling from unnormalized distributions.

\bibliographystyle{chicago}
\bibliography{rl}

\appendix

\end{document}